\tikzset{nature/.style={draw,rectangle}}
\tikzset{>={Stealth[scale=1.2]}}
\newcommand{\ie}{\emph{i.e.}\xspace}
\newcommand{\eg}{\emph{e.g.}\xspace}
\newtheorem{example}{Example}
\newtheorem{theorem}{Theorem}
\newtheorem{lemma}{Lemma}
\newtheorem{definition}{Definition}
\newcommand{\dist}[1]{\Delta(#1)}
\newcommand{\bE}{\mathbb{E}}
\DeclareMathOperator*{\argmax}{arg\,max}
\newcommand{\given}{\!\mid\!}
\newcommand{\pib}{\pi_{B}}
\newcommand{\pii}{\pi_{I}}
\newcommand{\dataset}{\mathcal{D}}
\newcommand{\cnt}{\#}
\newcommand{\Nwedge}{N_{\wedge}}
\newcommand{\cU}{\mathcal{U}}
\newcommand{\tuple}[1]{\langle #1 \rangle}
\newcommand{\Afree}{\bar{A}}
\newcommand{\supp}[2]{\mathrm{supp(#1,#2)}}
\let\Pr\relax
\DeclareMathOperator{\Pr}{\mathrm{Pr}}
\newcommand{\aVal}{\mathbf{aVal}}
\newcommand{\cVal}{\mathbf{cVal}}
\newcommand{\BibTeX}{B\kern-.05em{\sc i\kern-.025em b}\kern-.08em\TeX}
\begin{document}


\begin{frontmatter}


\paperid{6125} 


\title{Data-Efficient Safe Policy Improvement Using Parametric Structure}


\author[A]{\fnms{Kasper}~\snm{Engelen}\orcid{0000-0001-8986-9949}
\thanks{Corresponding Author. Email: kasper.engelen@uantwerpen.be}\footnote{Equal contribution, authors are listed alphabetically.}}
\author[A]{\fnms{Guillermo}~\snm{A. P\'{e}rez}\orcid{0000-0002-1200-4952}\footnotemark
}
\author[A]{\fnms{Marnix}~\snm{Suilen}\orcid{0000-0003-2163-3504}
\footnotemark
} 

\address[A]{University of Antwerp -- Flanders Make}


\begin{abstract}
Safe policy improvement (SPI) is an offline reinforcement learning problem in which a new policy that reliably outperforms the behavior policy with high confidence needs to be computed using only a dataset and the behavior policy.
Markov decision processes (MDPs) are the standard formalism for modeling environments in SPI.
In many applications, additional information in the form of \emph{parametric dependencies} between distributions in the transition dynamics is available.
We make SPI more data-efficient by leveraging these dependencies through three contributions: 
(1) a \emph{parametric} SPI algorithm that exploits known correlations between distributions to more accurately estimate the transition dynamics using the same amount of data;
(2) a preprocessing technique that \emph{prunes} redundant actions from the environment through a game-based abstraction;
and (3) a more advanced preprocessing technique, based on \emph{satisfiability modulo theory} (SMT) solving, that can identify more actions to prune.
Empirical results and an ablation study show that our techniques increase the data efficiency of SPI by multiple orders of magnitude while maintaining the same reliability guarantees.
\end{abstract}

\end{frontmatter}


\section{Introduction}

\emph{Reinforcement learning} (RL) is the standard paradigm for solving 
sequential decision-making problems in unknown environments~\cite{DBLP:books/lib/SuttonB98}.
In an \emph{online} RL setting, the agent interacts with an unknown environment to explore and gather information about its dynamics.
This environment is typically modeled as a Markov decision process (MDP), and the agent's goal is to compute a policy that maximizes the expected cumulative discounted reward~\cite{DBLP:books/wi/Puterman94}. 

Such online interaction is often undesirable or dangerous in real-world application areas such as robotics or healthcare~\cite{DBLP:journals/corr/abs-2005-01643,DBLP:books/sp/12/Kober012}.
Online \emph{safe} RL methods attempt to mitigate these concerns by constraining the agent's interaction~\cite{DBLP:journals/jmlr/GarciaF15}, for instance, through shielding~\cite{DBLP:conf/aaai/AlshiekhBEKNT18} or constrained actor-critic methods~\cite{DBLP:conf/aaai/YangSTS21}.
Nevertheless, online safe RL approaches still fundamentally rely on the agent exploring its environment.

In settings where such exploration is infeasible, \emph{offline} RL can serve as an alternative.
Offline RL prevents the agent from interacting with the environment and instead only allows the use of a dataset of past interactions generated by some \emph{behavior policy}~\cite{DBLP:journals/corr/abs-2005-01643,DBLP:books/sp/12/LangeGR12}.
Using only this dataset, the agent is tasked with finding a policy that performs well.
In settings where the behavior policy is also available, the task is to find a new policy that outperforms the behavior policy. 

\emph{Safe policy improvement} (SPI)~\cite{DBLP:conf/icml/ThomasTG15,DBLP:conf/nips/GhavamzadehPC16} is an offline RL problem that imposes a probabilistic correctness guarantee on the performance gain of the new policy over the behavior policy.
Safe policy improvement with baseline bootstrapping (SPIBB)~\cite{DBLP:conf/icml/LarocheTC19} is one of the most commonly used SPI methods.
SPIBB ensures the improvement guarantee is satisfied by constraining the standard policy iteration algorithm to \emph{bootstrap} to (\ie, follow) the behavior policy in state-action pairs where insufficient data is available.
Note that most SPI methods that ensure the improvement guarantee rely on discrete environments~\cite{DBLP:conf/nips/GhavamzadehPC16,DBLP:conf/icml/LarocheTC19,DBLP:conf/ijcai/WienhoftSSDB023,DBLP:conf/icml/Castellini0ZSFS23,DBLP:conf/icml/0002ZCSSF24,DBLP:conf/icaart/SchollDOU22,DBLP:conf/aaai/SimaoS023}.

Since the agent cannot collect any new data about the environment in SPI, it is paramount to make the most of the available information.
Existing SPI methods scale poorly in terms of the number of samples required, as also evidenced by the practice of using hyperparameters instead of the theoretically required sample sizes, see, \eg,~\cite{DBLP:conf/icml/LarocheTC19,DBLP:conf/ijcai/WienhoftSSDB023}. 
This challenge to make SPI(BB) more data efficient has given rise to methods that exploit structure in the underlying MDP, such as factored state-spaces~\cite{DBLP:conf/aaai/SimaoS19,DBLP:conf/ijcai/SimaoS19} or graph connectivity~\cite{DBLP:conf/ijcai/WienhoftSSDB023}.

While exact probabilities are often unknown, many real-world problems exhibit structure and dependencies in the underlying probability distributions.
For instance, in robotics, probability distributions are used to represent many different sources of uncertainty~\cite{DBLP:journals/trob/LauriHP23}.
One such source could be component failure, and the same component may fail with the same probability distribution across several different configurations, \ie, state-action pairs, and thus introduce dependencies between the probability distributions.
For a concrete example, consider slippery grid worlds such as Frozen Lake, where a single slip probability affects multiple transitions~\cite{DBLP:journals/corr/abs-2407-17032}.

\emph{Parametric MDPs} (pMDPs) are an extension of MDPs suitable to model parametric dependencies between probability distributions~\cite{DBLP:conf/colt/GopalanM15,DBLP:journals/iandc/BaierHHJKK20,DBLP:conf/birthday/0001JK22}.
In a pMDP, each transition is assigned a polynomial over parameters that can occur on multiple transitions, effectively defining dependencies between probability distributions on different state-action pairs.

\subsection*{Contributions}

In this paper, we investigate how to improve the data efficiency of SPI methods using such parametric structures.
We present three methods that exploit the additional knowledge given by the pMDP to make SPIBB more data-efficient.
Specifically, we introduce a parametric variant of SPIBB, as well as two preprocessing techniques that are independent of the dataset.
In more detail, our contributions are:

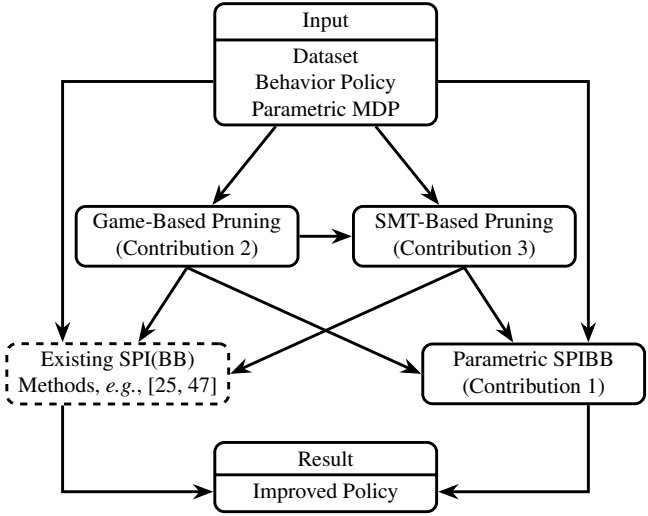
\begin{figure}[t]
    \centering
    \resizebox{1.0\linewidth}{!}{
    \begin{tikzpicture}[
squarenode/.style={rectangle, draw, rounded corners=4, very thick, minimum size=5mm, minimum width=3.2cm, minimum height=0.8cm, align=center},
squarenodedashed/.style={rectangle, dashed, draw, rounded corners=4, very thick, minimum size=5mm, minimum width=3.2cm, minimum height=0.8cm, align=center},
splitnode/.style={rectangle split,rectangle split parts=2, draw, rounded corners=4, very thick, minimum size=5mm, minimum width=3.2cm, minimum height=0.8cm, align=center},
->/.style = {-Stealth,very thick}%
]

\node[splitnode] at(0,0.5) (pMDP) {Input \nodepart{second} Dataset \\ Behavior Policy \\ Parametric MDP};

\node[squarenode] at(-2,-2) (gbpruning) {Game-Based Pruning \\ (Contribution 2)};
\node[squarenode] at(2,-2) (smtpruning) {SMT-Based Pruning \\ (Contribution 3)};

\node[squarenodedashed] at(-3,-4) (sSPIBB) {Existing SPI(BB) \\ Methods, \eg, \cite{DBLP:conf/icml/LarocheTC19,DBLP:conf/ijcai/WienhoftSSDB023}};
\node[squarenode] at(3,-4) (pSPIBB) {Parametric SPIBB \\ (Contribution 1)};

\node[splitnode] at(0,-5.5) (result) {Result \nodepart{second} Improved Policy};

\draw[->] (pMDP) -- (gbpruning);
\draw[->] (pMDP) -- (smtpruning);
\draw[->] ([yshift=-0.25cm]pMDP.west) -| (sSPIBB.150);
\draw[->] ([yshift=-0.25cm]pMDP.east) -| (pSPIBB.30);
\draw[->] (gbpruning) -- (smtpruning);
\draw[->] (gbpruning.south) -- (sSPIBB);
\draw[->] (gbpruning.south) -- (pSPIBB.west);
\draw[->] (smtpruning.south) -- (sSPIBB.east);
\draw[->] (smtpruning.south) -- (pSPIBB);
\draw[->] (sSPIBB.210) |- ([yshift=-0.5cm]result);
\draw[->] (pSPIBB.330) |- ([yshift=-0.5cm]result);

\end{tikzpicture}
    }
    \vspace{1em}
    \caption{Overview of our contributions and how they relate.}
    \label{fig:overview_flowchart}
\vspace{3em}
\end{figure}

\begin{enumerate}
    \item \textbf{Parametric SPIBB} (\Cref{sec:parametric_spibb})\textbf{.} 
    We extend SPIBB into \emph{parametric SPIBB} (pSPIBB) that exploits the parametric structure of the given pMDP.
    Specifically, when probability distributions at different state-action pairs follow the same parametric structure, we may lump together samples from these state-action pairs.
    This effectively increases the available data, and
    as a result, pSPIBB has fewer state-action pairs that need to be bootstrapped to the behavior policy compared to standard SPIBB.
    \item \textbf{Game-based  pruning} (\Cref{sec:game-based_analysis})\textbf{.} 
    Our second contribution is a preprocessing method that prunes state-action pairs from the environment via a game-based abstraction.
    By reasoning over all possible probability distributions, we obtain lower and upper bounds on the value of the pMDP.
    Using these bounds, we prune sub-optimal state-action pairs for which we can guarantee that an optimal policy will never visit these state-action pairs.
    \item \textbf{SMT-based pruning} (\Cref{sec:smt-based_analysis})\textbf{.}
    Our third contribution is an exact pruning method based on a \emph{satisfiability modulo theory} (SMT) query.
    Since game-based pruning is an abstraction technique, it can be overly conservative.
    In contrast, SMT-based pruning reasons directly over the transition polynomials, and thus potentially removes more state-action pairs than game-based pruning, but at additional computational expense.
\end{enumerate}

\Cref{fig:overview_flowchart} provides an overview of how our contributions relate.
In particular, standard SPI(BB) methods can be used on the input by simply ignoring the parametric MDP, while pSPIBB takes the parametric structure into account. 
Additionally, the two pruning methods serve as preprocessing steps for both standard SPI(BB) methods and pSPIBB.
In \Cref{sec:experiments}, we provide an extensive empirical evaluation and ablation study for these techniques on four well-known benchmarks and a newly introduced \emph{Rock-Paper-Scissors} environment.
All proofs can be found in the appendix.

\section{Background}
By $|X|$, we denote the number of elements in a finite set $X$.
A discrete probability distribution over a finite set $X$ is a function $\mu \colon X \to [0,1]$ with $\sum_{x \in X} \mu(x) = 1$.
The set of all distributions over $X$ is denoted $\dist{X}$.
The \emph{support} of a probability distribution $\mu$ is the set of all elements with non-zero probability: $\mathrm{supp}(\mu) = \{x \in X \mid \mu(x) > 0\}$.

\begin{definition}[MDP]
    A \emph{Markov decision process} (MDP) is a tuple $\tuple{S, \iota, A, T, R, \gamma}$, where $S$ is a finite set of states, $\iota \in S$ is the initial state, $A$ is a finite set of actions, $T \colon S \times A \to \dist{S}$ is the probabilistic transition function, $R \colon S \times A \to [-R_{\max},R_{\max}]$ is the (bounded) reward function, and $\gamma \in (0,1)$ is the discount factor.
\end{definition}
We write $T(s' \given s, a)$ for the probability $T(s,a)(s')$. 
By $\mathrm{supp}(s,a) = \left\{ s' \given T(s' \given s, a) > 0 \right\}$ we denote the \emph{support} of a state-action pair $\langle s,a \rangle$.
A \emph{trajectory} is a sequence of successive states and actions: $s_0a_0s_1\dots$, such that $s_{i+1} \in \mathrm{supp}(s_i,a_i)$ and $s_0 = \iota$.

A stationary (or memoryless) stochastic policy is a function $\pi \colon S \to \dist{A}$.
The objective is to find a policy that maximizes the expected cumulative discounted reward:
$\pi^* = \argmax_{\pi} \bE \left[ \sum\nolimits_{t=0}^{\infty} \gamma^t R(s_t,a_t) \right]$,
where $\tuple{s_t,a_t}$ is the state-action pair visited at time $t$ when following $\pi$.
The \emph{value} of an MDP is the unique least fixed point of the following recursive equations:
\begin{align}
    Q(s,a) &= R(s,a) + \gamma \sum\nolimits_{s' \in S} T(s' \given s,a )V(s), \label{eq:q_values}\\
    V(s) &= \max_{a \in A} Q(s,a), \label{eq:state_values}
\end{align}
where $Q(s,a)$ are called the state-action values and $V(s)$ are the state-values, both initialized at zero.
We write $V^\pi$ and $Q^\pi$ for both value functions under some policy $\pi$, which we call the \emph{performance} of $\pi$.
Optimal state and state-action values are given by the performance of an optimal policy $\pi^*$, denoted by $V^* = V^{\pi^*}$ and $Q^* = Q^{\pi^*}$, respectively.

\subsection{Safe Policy Improvement}
The offline RL problem of \emph{safe policy improvement} (SPI) is to compute a policy $\pii$ that improves over a given behavior policy $\pib$, based on historical data $\dataset = \braket{s_i,a_i,s_{i+1}}_{i = 0 \dots n}$ obtained from executing $\pib$.
Specifically, for an \emph{admissible performance loss} $\zeta$ and confidence parameter $\delta$, the performance of the improved policy $\pii$ must be better than that of $\pib$ with high probability:
\begin{equation}
\label{eq:spi_guarantee}
    \Pr\big( V^{\pii}(\iota) \geq V^{\pib}(\iota) - \zeta \big) > 1 - \delta.
\end{equation}

For a dataset $\dataset$, we denote by $\cnt_\dataset(s,a)$ and $\cnt_{\dataset}(s,a,s')$ the number of times a state-action pair or transition occurs in $\dataset$.
The \emph{maximum likelihood estimate} MDP (MLE-MDP) of a dataset $\dataset$ is given by $\tilde{M} = \tuple{S, \iota, A, \tilde{T}, R, \gamma}$, where 
\begin{equation}
\label{eq:mle_est}
\tilde{T}(s' \given s, a) = \frac{\cnt_{\dataset}(s,a,s')}{\cnt_{\dataset}(s,a)}
\end{equation}
when $\cnt_{\dataset}(s,a) > 0$, and zero otherwise.\footnote[1]{Note that when no data is available for a state-action pair, that action is \emph{disabled} at that state in the MLE-MDP.}

The \emph{safe policy improvement with baseline bootstrapping} (SPIBB)~\cite{DBLP:conf/icml/LarocheTC19} algorithm solves the SPI problem by only deviating from the behavior policy in states where sufficient data is available.
Specifically, SPIBB defines a set $\cU$ of state-action pairs that are \emph{uncertain}.
That is, state-action pairs of which the number of samples contained in $\dataset$ is below a specified threshold $\Nwedge$:
\begin{equation}
\label{eq:uncertainty_set}
\cU = \left\{ \braket{s,a} \in S \times A \mid \cnt_{\dataset}(s,a) < \Nwedge  \right\}.
\end{equation}

The improved policy $\pii$ is computed through policy iteration as a policy that chooses an action with the probability of $\pib$ if that state-action pair is uncertain, and concentrates the remaining probability mass on an action that is optimal for the MLE-MDP.
Formally, let $Q^{\pi}_{\tilde{M}}$ be the optimal state-action values of the MLE-MDP under a policy $\pi$ and let $\Afree(s) = \{a \in A \given \tuple{s,a} \not\in \cU\}$ denote the non-bootstrapped actions at state $s$, an improved bootstrapped policy $\pi'$ is computed as
\begin{align*}
   \pi'(a \given s) =
&\begin{cases}
        \pib(a \given s) & \forall \tuple{s,a} \in \cU, \\
        \sum\limits_{a' \in \Afree(s)
        }
        \pib(a' \given s) & a = \argmax\limits_{a' \in \Afree(s) 
        } Q^{\pi}_{\tilde{M}}(s,a'), \\
        0 & \text{ otherwise}.
    \end{cases}
\end{align*}
Iterating the process described above, starting from the behavior policy $\pib$, converges to the improved policy $\pii$~\citep{DBLP:conf/icml/LarocheTC19}.

For a fixed $\zeta$ and $\delta$, the threshold $\Nwedge$ required to ensure the SPI guarantee (\Cref{eq:spi_guarantee}) can be computed by a binary search~\cite{DBLP:conf/ijcai/WienhoftSSDB023} and is bounded by
\[
\Nwedge \leq \frac{32V_\mathit{max}^2}{\zeta(1-\gamma)^2} \log \frac{8|S|^2|A|^2}{\delta},
\]
where $V_\textit{max} \leq \nicefrac{R_{\max}}{1-\gamma} $ is a bound on the maximal value.
In practice, $\Nwedge$ is often set as a hyperparameter, together with $\delta$, and the \emph{possible} loss $\zeta$ is computed instead~\cite{DBLP:conf/icml/LarocheTC19}.
For a fixed $\Nwedge$ and $\delta$,~\cite{DBLP:conf/ijcai/WienhoftSSDB023} establish the currently best-known bound on $\zeta$ through the inverse incomplete Beta function~\cite{temme1992asymptotic} $I^{-1}$, being:
\[
\zeta \leq \frac{4V_\mathit{max}}{1-\gamma}\left(1-2I^{-1}_{\delta_T}\left( \frac{\Nwedge}{2}+1, \frac{\Nwedge}{2}+1 \right) \right) + c,
\]
where $\delta_T = \nicefrac{\delta}{2|S|^2|A|^2}$ and $c = V_{\tilde{M}}^{\pib}(\iota) - V_{\tilde{M}}^{\pii}(\iota)$ is a constant for the difference in performance between the behavior policy $\pib$ and the improved policy $\pii$ on the MLE-MDP $\tilde{M}$.

\subsection{Parametric MDPs}
\emph{Parametric MDPs} extend ordinary MDPs by a \emph{parametric} transition function that maps transitions $\tuple{s,a,s'}$ to polynomials with rational coefficients over a set of variables $X$. 
We write $\mathbb{Q}[X]$ for the set of all such polynomials.

\begin{definition}[pMDP]
A \emph{parametric MDP} (pMDP) is a tuple $\tuple{S, \iota, A, P, X,R, \gamma}$, where $S$, $\iota$, $A$, $R$, and $\gamma$ are as in MDPs, $X$ is a set of parameters and $P \colon S \times A \times S \to \mathbb{Q}[X]$ is the \emph{parametric} transition function.
\end{definition}

We assume that all transition polynomials that are not (syntactically) $0$ have some nonzero value in their image. 
We write $\mathrm{supp}(s,a) = \left\{ s' \given P(s,a,s') \neq 0 \right\}$.
A pMDP can be \emph{instantiated} into an MDP by applying a \emph{valuation} $\theta \in \mathbb{R}^{\lvert X \rvert}$, which results in an ordinary transition function $P_\theta (s' \given s, a) = P(s,a,s')(\theta)$.
We extend state and state-action values to pMDPs instantiated by a valuation $\theta$, denoted $V_\theta$ and $Q_\theta$, respectively, as well as $Q^\pi_\theta$ and $V^\pi_\theta$ for the performance of a policy $\pi$. 
%
We only consider \emph{graph-preserving} valuations $\theta \in \Theta_{\mathrm{gp}}$, \ie, valuations that all share the same underlying graph. 
Formally, all $\theta \in \Theta_{\mathrm{gp}}$ conform to the following, for all $s,s' \in S$ and all $a \in A$:
(1) $\sum_{s'' \in S} P_\theta(s, a, s'') = 1$, (2) $0 \leq P_\theta(s, a, s') \leq 1$, and (3) $P(s, a, s') \neq 0 \implies P_\theta(s, a,s') \neq 0$.

\section{Parametric SPIBB}
\label{sec:parametric_spibb}

As the introduction outlines, our first contribution is the extension of SPIBB into \emph{parametric SPIBB} (pSPIBB), which relies on the following two modifications.
First, we adapt the MLE-MDP (\Cref{eq:mle_est}) to consider occurrences of polynomials instead of transitions in the dataset.
Second, we replace the set of uncertain state-action pairs (\Cref{eq:uncertainty_set}) by again counting the occurrences of polynomials instead of state-action pairs.
We discuss these modifications in detail in the remainder of this section.

We assume that state-action distributions do not label transitions with identical polynomials, \ie, $P(s,a,s') = P(s,a,s'')$ implies $s' = s''$. 
This assumption is without loss of generality as successors along a transition with identical labels can be replaced by a fresh intermediate state followed by a uniform distribution over the original successors (cf. reduction to two-successor MDPs~\cite{DBLP:conf/ijcai/WienhoftSSDB023}). 
We lift the polynomial labeling to state-action pairs by setting $P(s,a) = \{P(s,a,s') \mid s' \in S\}$. 

We define equivalence classes of state-action pairs and transitions and treat the polynomials as labels to count their occurrences together in the data, a technique known as \emph{parameter tying}~\citep{DBLP:journals/corr/abs-2408-03093,DBLP:conf/icml/PoupartVHR06,DBLP:conf/qest/PolgreenWHA16}. 
Note that we do not need to know the exact transition probabilities, since we are only comparing the polynomial labels of the given pMDP.
We write $[s,a]_P$ for the set $\{\tuple{q,b} \given P(s,a) = P(q,b)\}$ and $[s,a,s']_P$ for $\{\tuple{q,b,q'} \given P(s,a,s') = P(q,b,q') \text{ and } P(s,a) = P(q,b)\}$. 
Note that equivalence between transitions also requires the state-action pair to be equivalent in terms of polynomial labels over successor transitions (see \Cref{fig:example_mle_fail}). 
We define the MLE-MDP over a pMDP by replacing \Cref{eq:mle_est} by:
\begin{equation}
\label{eq:mle_param_sharing}
\tilde{P}(s' \given s, a) = \frac{\sum_{(q,b,q') \in [s,a,s']_P} \cnt_{\dataset}(q,b,q')}{\sum_{(q,b) \in [s,a]_P}\cnt_{\dataset}(q,b)}.
\end{equation}

\begin{figure}[t]
\centering
    \begin{tikzpicture}[shorten >=1pt,auto,node distance=.9 cm, scale = 1.0, transform shape, ->]
		\tikzstyle{action} = [fill=black, shape=rectangle, draw]
		
		\node[state,initial=left,initial text={}](s0){$s_0$};
		\node[state, left = 2cm of s0, draw=none](invis){};
		\node[state, right = 2cm of s0, draw=none](invis2){};

		\node[action](s0_a)[above= of invis]{};
		\node[action](s0_b)[above= of s0]{};
		\node[action](s0_c)[above= of invis2]{};

		\node[state, above left=1cm and 0.2cm of s0_a](s1){$s_1$};
		\node[state, above right=1cm and 0.2cm of s0_a](s2){$s_2$};
		
		\node[state, above left=1cm and 0.7cm of s0_b](s3){$s_3$};
		\node[state, above= of s0_b](s4){$s_4$};
		\node[state, above right=1cm and 0.7cm of s0_b](s5){$s_5$};
		
		\node[state, above left=1cm and 0.2cm of s0_c](s6){$s_6$};
		\node[state, above right=1cm and 0.2cm of s0_c](s7){$s_7$};
		
		\draw[auto]
            (s0) edge node {$a$} (s0_a)
		(s0) edge node {$b$} (s0_b)
		(s0) edge node[swap] {$c$} (s0_c)
		(s0_a) edge node {$x$} (s1)
		(s0_a) edge node[swap] {$1-x$} (s2)
		(s0_b) edge node {$x$} (s3)
		(s0_b) edge node {$y$} (s4)
		(s0_b) edge[pos=0.3] node[swap] {$1-x-y$} (s5)
		(s0_c) edge node[pos=0.7] {$y$} (s6)
		(s0_c) edge node[swap] {$1-y$} (s7);
	\end{tikzpicture}
    \vspace{1em}
    \caption{
    An example pMDP with parameters $x$ and $y$.
    In this pMDP, pSPIBB's MLE-MDP does not reduce uncertainty, \eg, $P(s_0,a,s_1) = P(s_0,b,s_3)$ but $\tuple{s_0,b,s_3} \not\in [s_0,a,s_1]_P$ because $P(s_0,b,s_5) \not\in [s_0,a]_P$. 
    Further note that if $\nicefrac{\cnt_{\dataset}(s_0,c,s_6)}{\cnt_\dataset(s_0,c)} = \nicefrac{\cnt_{\dataset}(s_0,a,s_1)}{\cnt_\dataset(s_0,a)} = 1$, it is unclear what the values of $\tilde{P}(s_i \given s_0, b)$ should be set to in order to minimize uncertainty while obtaining a valid distribution.}
    \vspace{3em}
    \label{fig:example_mle_fail}
\end{figure}

We modify SPIBB into pSPIBB by taking shared parameters into account.
We first replace \Cref{eq:mle_est} with \Cref{eq:mle_param_sharing}. 
Then we redefine the uncertainty set by taking into account not only how many times a state-action pair $\tuple{s,a}$ occurs in the dataset but also how many identically-labeled pairs $\tuple{q,b} \in [s,a]_L$ do:
\begin{equation}
\label{eq:pSPIBB_uncertainty_set}
\cU = \Big\{ \braket{s,a} \in S \times A \:\Big|\: \sum\nolimits_{(q,b) \in [s,a]_L}\cnt_{\dataset}(q,b) < \Nwedge  \Big\}.
\end{equation}
As a result, this new uncertainty set is a subset of the original uncertainty set that does not use parameter sharing (\Cref{eq:uncertainty_set}).
Note that this modification does not change the underlying correctness proof of SPIBB as we are simply merging two identical Bernoulli experiments into one.
Hence, pSPIBB is guaranteed to perform equally well or better compared to standard SPIBB while achieving the same improvement guarantee.

\section{Game-Based Analysis}
\label{sec:game-based_analysis}

Our second contribution enhances the data efficiency of both SPIBB and pSPIBB through a preprocessing step that prunes state-action pairs that an optimal policy never visits.
First, we analyze sequential decision-making in (p)MDPs from a game-based perspective and then show how to use a game-based abstraction scheme to prune unnecessary state-action pairs.

Decision-making under uncertainty can be phrased as a two-player game between an agent and \emph{nature}~\cite{DBLP:journals/jcss/Papadimitriou85}. 
The agent selects the actions, while nature draws successor states from the probability distributions in a (p)MDP. 
Taking this two-player game point of view, we can replace nature with \emph{cooperative} and \emph{antagonistic} players to obtain upper and lower bounds on the values that can be achieved in the unknown pMDP. 
Intuitively, playing against a cooperative player will result in an optimistic overapproximation of the value, while playing against an antagonistic player yields a worst-case underapproximation. 
These bounds hold regardless of the (unknown) transition probabilities that influence the behavior of nature. 
Hence, we can use the bound to prune the set of uncertain state-action pairs in SPI settings, even in the complete absence of data.

Playing against a cooperative or antagonistic nature, the agent's respective optimal policies are defined as:
\begin{align*}
    \overline{\pi} = {} & \argmax_\pi \Big(\sup_\rho \sum\nolimits_{t=0}^\infty \gamma^t R(s_t,a_t) \Big),\\
    \underline{\pi} = {} & \argmax_\pi \Big(\inf_\rho \sum\nolimits_{t=0}^\infty \gamma^t R(s_t,a_t) \Big),
\end{align*}
where the supremum and infinum range over trajectories $\rho = s_0 a_0 s_1 \dots$ consistent with $\pi$, \ie,  $a_{i} \in \mathrm{supp}(\pi(s_i))$ for all $i \geq 0$, and $\tuple{s_t,a_t}$ is the state-action pair visited at time $t$ along the run. 
We now define the antagonistic and cooperative values of a pMDP.

\begin{definition}[aVal and cVal]
The \emph{antagonistic} and \emph{cooperative values} of the pMDP are the unique solutions to the corresponding Bellman equations~\cite[Theorem 5.1]{DBLP:journals/tcs/ZwickP96}:
\begin{align}
    \aVal(s) &= \max_{a \in A} \min_{s' \in \mathrm{supp}(s,a)} R(s,a) + \gamma \aVal(s'), \label{eqn:aval}\\
    \cVal(s) &= \max_{a \in A} \max_{s' \in \mathrm{supp}(s,a)} R(s,a) + \gamma \cVal(s').\label{eqn:cval}
\end{align}
We omit $\theta$ from the notation since the values depend only on the support of the probability distributions and not on the actual probabilities themselves.
We write $\aVal^\pi$ and $\cVal^\pi$ for the antagonistic and cooperative value functions, respectively, under policy $\pi$, \ie, the unique solution to:
\begin{align}
    \aVal^\pi(s) &= \min_{\substack{a \in \mathrm{supp}(\pi(s))\\s' \in \mathrm{supp}(s,a)}} R(s,a) + \gamma \aVal^\pi(s'),\label{eqn:aval-pol}\\
    \cVal^\pi(s) &= \max_{\substack{a \in \mathrm{supp}(\pi(s))\\s' \in \mathrm{supp}(s,a)}} R(s,a) + \gamma \cVal^\pi(s').\label{eqn:cval-pol}
\end{align}
Note that $\aVal^{\underline{\pi}}(\iota) = \aVal(\iota)$ and $\cVal^{\overline{\pi}}(\iota) = \cVal(\iota)$.
\end{definition}

For intuition with respect to \Cref{eqn:aval-pol} and \Cref{eqn:cval-pol}, note that under policy $\pi$, all remaining probabilistic choices are resolved antagonistically or cooperatively, respectively. 
This resolution can be seen as modifying the policy to choose the worst, respectively, best, actions only---instead of having a distribution over multiple actions.

We use these antagonistic and cooperative values to find \emph{sub-optimal} state-action pairs, \ie, pairs $\tuple{s,a}$ for which we can guarantee that an optimal policy $\pi^*$ will never select $a$ in state $s$.
In that case, we can \emph{prune} $\tuple{s,a}$ from the pMDP.

To establish the correctness of this approach, we first present two results that describe the relation between the expected discounted reward and the antagonistic values of a state. 
The first result, \Cref{lem:expected_value_bound_for_policy}, states that $\aVal$ and $\cVal$ are inclusive lower and upper bounds on the expected discounted reward $V_\theta$.
\begin{lemma}
\label{lem:expected_value_bound_for_policy}
Given a pMDP, the following inequalities hold for all $s \in S$ and all graph-preserving valuations $\theta \in \Theta_{\mathrm{gp}}$.
\[
 \mathbf{aVal}(s) \leq V_\theta(s) \leq \mathbf{cVal}(s).
\]
\end{lemma}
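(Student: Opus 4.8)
The plan is to prove the two inequalities $\aVal(s) \leq V_\theta(s)$ and $V_\theta(s) \leq \cVal(s)$ separately, each by exhibiting that the corresponding game value is a fixed point (or sub-/super-fixed point) of the Bellman operator defining $V_\theta$, and then invoking uniqueness of the least fixed point. I will focus on the lower bound $\aVal(s) \leq V_\theta(s)$; the upper bound is symmetric, replacing $\min$ by $\max$ over the support.

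First I would fix an arbitrary graph-preserving valuation $\theta \in \Theta_{\mathrm{gp}}$ and let $V_\theta$, $Q_\theta$ denote the ordinary MDP values of the instantiated MDP $M_\theta$, satisfying \Cref{eq:q_values}–\Cref{eq:state_values}. The key observation is that for any state-action pair $\tuple{s,a}$ and any function $f \colon S \to \bR$, since $\theta$ is graph-preserving we have $\supp{s}{a}$ in $M_\theta$ equal to $\supp{s}{a}$ in the pMDP, and $T_\theta(\cdot \given s,a) = P_\theta(\cdot \given s,a)$ is a genuine probability distribution on that support; hence
\[
\min_{s' \in \mathrm{supp}(s,a)} f(s') \;\leq\; \sum_{s' \in S} P_\theta(s' \given s,a)\, f(s').
\]
Applying this with $f = \aVal$ and adding $R(s,a)$, then discounting by $\gamma$, gives, for every $a$,
\[
R(s,a) + \gamma \min_{s' \in \mathrm{supp}(s,a)} \aVal(s') \;\leq\; R(s,a) + \gamma \sum_{s' \in S} P_\theta(s' \given s,a)\, \aVal(s').
\]
Taking $\max_{a \in A}$ on both sides, the left-hand side is exactly $\aVal(s)$ by \Cref{eqn:aval}, and the right-hand side is the Bellman-optimality operator $\mathcal{T}_\theta$ applied to $\aVal$ at $s$. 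So $\aVal \leq \mathcal{T}_\theta(\aVal)$ pointwise, i.e. $\aVal$ is a sub-fixed point of $\mathcal{T}_\theta$.

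To conclude, I would invoke monotonicity and the contraction property of $\mathcal{T}_\theta$ (standard for discounted MDPs, $\gamma \in (0,1)$): from $\aVal \leq \mathcal{T}_\theta(\aVal)$ and monotonicity of $\mathcal{T}_\theta$, iterating yields $\aVal \leq \mathcal{T}_\theta(\aVal) \leq \mathcal{T}_\theta^2(\aVal) \leq \cdots$, and the iterates converge to the unique fixed point $V_\theta$ by the Banach fixed point theorem; hence $\aVal \leq V_\theta$. (Alternatively, one argues directly: $\aVal = \aVal^{\underline{\pi}}(\iota)$-style, $\aVal$ is realized by the trajectory along which nature plays antagonistically under the $\aVal$-optimal policy, and any such trajectory's discounted reward lower-bounds the expectation $V_\theta^{\underline\pi} \leq V_\theta$.) The upper bound $V_\theta \leq \cVal$ is obtained identically, using $\sum_{s'} P_\theta(s'\given s,a) f(s') \leq \max_{s' \in \mathrm{supp}(s,a)} f(s')$, so $\cVal$ is a super-fixed point of $\mathcal{T}_\theta$ and therefore dominates $V_\theta$.

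The main obstacle I anticipate is purely a matter of care rather than depth: ensuring that the support sets genuinely coincide between the pMDP and every graph-preserving instantiation (this is exactly condition (3) in the definition of $\Theta_{\mathrm{gp}}$, together with the standing assumption that syntactically nonzero polynomials are nonzero in their image), so that the $\min$/$\max$ over $\supp{s}{a}$ is never taken over a set that the distribution $P_\theta$ fails to be supported on — otherwise the elementary inequality relating a min/max over a set to a convex combination over that set could fail. Once the graph-preservation bookkeeping is pinned down, the argument is a routine monotone-fixed-point comparison.
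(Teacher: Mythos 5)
Your proof is correct, but it takes a genuinely different route from the paper. You argue via the Bellman operator: using graph preservation to identify the supports, you show $\mathbf{aVal}$ is a sub-fixed point and $\mathbf{cVal}$ a super-fixed point of the optimality operator $\mathcal{T}_\theta$ (a convex combination over $\supp{s}{a}$ is sandwiched between the min and max over that set), and then conclude by monotonicity and the contraction property that the iterates squeeze $V_\theta$ between them. The paper instead argues pathwise: for any policy $\pi$ and any trajectory consistent with it, the realized discounted reward satisfies $\aVal^\pi(s) \leq \sum_t \gamma^t R(s_t,a_t) \leq \cVal^\pi(s)$ \emph{surely} by the very definition of $\aVal^\pi$ and $\cVal^\pi$ as inf/sup over consistent trajectories; monotonicity of expectation then gives $\aVal^\pi(s) \leq V^\pi_\theta(s) \leq \cVal^\pi(s)$, and taking suprema over $\pi$ yields the claim. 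The paper's route is slightly more elementary (no fixed-point machinery beyond the definitions) and, importantly, it delivers the per-policy bounds $\aVal^\pi(s) \leq V^\pi_\theta(s) \leq \cVal^\pi(s)$ as a by-product, which is the form actually reused in the pruning theorems; your operator argument as written only bounds the optimal value, though it adapts immediately to the policy-evaluation operator if the per-policy version is needed. Your attention to the support-coincidence issue (condition (3) of graph preservation plus the nonzero-image assumption) is exactly the right bookkeeping point, and it is implicitly what makes the paper's "consistent trajectory" quantification match between the pMDP and every instantiation.
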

\noindent
The proof immediately follows from the definitions and properties of convex combinations.

The second result, \Cref{lem:almost_sure_strict_lower_bound}, provides an effective characterization of when a policy almost-surely, \ie, with probability one, obtains a cumulative reward that is strictly higher than the antagonistic value. 

Let $\mathcal{I} = \{\tuple{s,a,s'} \in S \times A \times S \given \aVal(s) < R(s,a) + \gamma \aVal(s') \}$ denote the set of transitions along which the antagonistic value is strictly improved. We write $\tau_{\mathcal{I}}$ for the \emph{first hit time} of $\mathcal{I}$, \ie, the random variable for the first time step where a transition from $\mathcal{I}$ is traversed. 
The first hit time is $\infty$ if such a transition is never traversed. 
Intuitively, we show that a policy almost-surely obtains a cumulative discounted reward that is strictly larger than the antagonistic value if and only if it is a worst-case optimal policy and it almost-surely improves the antagonistic value along some transition.

\begin{theorem}\label{lem:almost_sure_strict_lower_bound}
Let $\theta \in \Theta_\mathsf{gp}$. 
Then, there is a policy $\pi$ such that
\begin{equation}\label{eqn:as-more-than-aval}
\Pr^\pi_\theta\left(\sum\nolimits_{t=0}^\infty \gamma^t R(s_t,a_t) > \mathbf{aVal}(\iota)\right) = 1,
\end{equation}
if and only if there is a worst-case optimal policy $\pi$, \ie, $\aVal^\pi(\iota) = \aVal(\iota)$, with $\Pr^\pi_\theta\left(\tau_{\mathcal{I}} < \infty \right) = 1$.
\end{theorem}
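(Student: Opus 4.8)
The plan is to prove both implications through a single potential-function argument. For a trajectory $\rho = s_0 a_0 s_1 \dots$ set $\Phi_t(\rho) = \sum_{k=0}^{t-1}\gamma^k R(s_k,a_k) + \gamma^t \aVal(s_t)$, so that $\Phi_0(\rho) = \aVal(\iota)$, $\Phi_t(\rho) \to \sum_{t\ge 0}\gamma^t R(s_t,a_t)$ as $t\to\infty$ (because $\gamma^t\aVal(s_t)\to 0$ by boundedness of rewards), and
\[
\Phi_{t+1}(\rho) - \Phi_t(\rho) = \gamma^t\bigl(R(s_t,a_t) + \gamma\aVal(s_{t+1}) - \aVal(s_t)\bigr),
\]
which is strictly positive exactly when $\tuple{s_t,a_t,s_{t+1}}\in\mathcal{I}$. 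Everything then reduces to controlling $\Phi$ along $\mathcal{I}$-free prefixes.

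The technical core is the following lemma, proved by induction on $|h|$: if $\pi$ is worst-case optimal ($\aVal^\pi(\iota)=\aVal(\iota)$) and $h=s_0a_0\dots s_t$ is a history consistent with $\pi$ that contains no transition of $\mathcal{I}$, then the residual behaviour of $\pi$ after $h$ is again worst-case optimal from $s_t$, and every transition $\tuple{s_k,a_k,s_{k+1}}$ in $h$ satisfies $R(s_k,a_k)+\gamma\aVal(s_{k+1})=\aVal(s_k)$. The induction step uses the finite Bellman characterisation $\aVal^{\pi}(s)=\min_{a\in\mathrm{supp}(\pi(s)),\,s'\in\mathrm{supp}(s,a)}R(s,a)+\gamma\aVal^{\pi\text{-after-}\tuple{a,s'}}(s')$: the non-$\mathcal{I}$ inequality $R(s_t,a_t)+\gamma\aVal(s_{t+1})\le\aVal(s_t)$ combined with the optimality inequality $R(s_t,a_t)+\gamma\aVal^{\pi\text{-after}}(s_{t+1})\ge\aVal(s_t)$ and the always-true $\aVal^{\pi\text{-after}}(s_{t+1})\le\aVal(s_{t+1})$ forces both the transition equality and worst-case optimality of the residual policy. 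A direct corollary: along every $\mathcal{I}$-free trajectory of a worst-case optimal policy, $\Phi_t$ is constant and equal to $\aVal(\iota)$.

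For the \textbf{``if''} direction, given worst-case optimal $\pi$ with $\Pr^\pi_\theta(\tau_{\mathcal{I}}<\infty)=1$, define $\pi'$ to follow $\pi$ until the first $\mathcal{I}$-transition is traversed and afterwards to follow a memoryless policy $\underline\pi$ with $\aVal^{\underline\pi}(s)=\aVal(s)$ for all $s$ (one attaining the outer maximum in \eqref{eqn:aval} at every state); by construction $\underline\pi$ only uses transitions with $R+\gamma\aVal(\cdot)\ge\aVal(\cdot)$, so under $\underline\pi$ the potential is nondecreasing and the return from any state $s$ is $\ge\aVal(s)$. Since $\pi'$ coincides with $\pi$ up to and including $\tau_{\mathcal{I}}$, still $\Pr^{\pi'}_\theta(\tau_{\mathcal{I}}<\infty)=1$. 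On that probability-one event the lemma gives $\Phi_{\tau_{\mathcal{I}}}=\aVal(\iota)$, the $\mathcal{I}$-transition gives $\Phi_{\tau_{\mathcal{I}}+1}>\aVal(\iota)$, and the $\underline\pi$-tail gives return $\ge\Phi_{\tau_{\mathcal{I}}+1}>\aVal(\iota)$; hence $\pi'$ witnesses \eqref{eqn:as-more-than-aval}. For the \textbf{``only if''} direction, suppose $\pi$ satisfies \eqref{eqn:as-more-than-aval}. Then $\pi$ is worst-case optimal: if some consistent trajectory had return $<\aVal(\iota)$, discounting would make a sufficiently long prefix of it force the return below $\aVal(\iota)$, and since $\theta$ is graph-preserving that finite prefix has positive probability, contradicting \eqref{eqn:as-more-than-aval}. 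Moreover $\Pr^\pi_\theta(\tau_{\mathcal{I}}<\infty)=1$: on $\{\tau_{\mathcal{I}}=\infty\}$ all transitions are $\mathcal{I}$-free, so by the lemma $\Phi_t\equiv\aVal(\iota)$ and the return equals $\aVal(\iota)$; if this event had positive probability, \eqref{eqn:as-more-than-aval} would fail. So $\pi$ itself is the required witness.

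The main obstacle is the technical lemma, because of history-dependent policies: a worst-case optimal policy need not behave optimally after an $\mathcal{I}$-transition, so the induction must be confined to $\mathcal{I}$-free prefixes, and one must check that the two-sided squeeze propagates simultaneously the transition equality and worst-case optimality of the conditioned policy. The measure-theoretic step in the ``only if'' direction — converting a single bad infinite trajectory into a positive-probability bad event via discounting together with graph-preservation — is routine but indispensable.
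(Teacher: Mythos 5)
Your proof is correct and follows essentially the same route as the paper's: the same key lemma (that along $\mathcal{I}$-free prefixes of a worst-case optimal policy the antagonistic value is exactly preserved, and that the policy can be modified to play worst-case optimally after the first $\mathcal{I}$-transition), the same decomposition of the return at $\tau_{\mathcal{I}}$ for the ``if'' direction, and the same two contradiction arguments (discounting plus graph-preservation yielding a positive-probability bad cylinder, and the exact-value argument on $\{\tau_{\mathcal{I}}=\infty\}$) for the ``only if'' direction. The potential function $\Phi_t$ is a cleaner packaging of the paper's induction on \Cref{eqn:aval} and \Cref{eqn:aval-pol}, and your explicit treatment of residual policies after histories is a welcome tightening, but it is not a different argument.
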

Almost-sure finite-time hitting (also known as reachability) does not depend on concrete probability values but only on their support~\cite{DBLP:books/daglib/0020348,DBLP:conf/ijcai/BharadwajRPT17}. 
Since the antagonistic value is independent of $\theta$ (and can be computed, \eg, as a solution to \Cref{eqn:aval}), we get that \Cref{eqn:as-more-than-aval} is also independent of $\theta$ and can be checked on the given pMDP for a policy $\pi$. 

As an immediate consequence of \Cref{lem:almost_sure_strict_lower_bound}, we get that the lower bound from \Cref{lem:expected_value_bound_for_policy} is strict under some conditions that are easy to check, as stated in the following lemma.
\begin{lemma}\label{lem:strict-nonstrict}
    If there exist $\theta \in \Theta_{\mathrm{gp}}$ and a policy $\pi$ such that $\aVal^\pi(\iota) = \aVal(\iota)$ and $\Pr^\pi_\theta\left(\tau_{\mathcal{I}} < \infty \right) = 1$ then for all $\theta \in \Theta_{\mathrm{gp}}$ we have $\mathbf{aVal}(\iota) < V_\theta(\iota)$.
\end{lemma}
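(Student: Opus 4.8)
The plan is to obtain this lemma as a short corollary of \Cref{lem:almost_sure_strict_lower_bound}, using the fact that its right-hand condition is a purely qualitative (graph-based) property. First I would fix the witnesses $\theta_0 \in \Theta_{\mathrm{gp}}$ and $\pi$ supplied by the hypothesis, so that $\aVal^\pi(\iota) = \aVal(\iota)$ and $\Pr^\pi_{\theta_0}\!\left(\tau_{\mathcal{I}} < \infty\right) = 1$. The quantity $\aVal^\pi$ is defined through \Cref{eqn:aval-pol}, which refers only to the supports $\mathrm{supp}(s,a)$ and $\mathrm{supp}(\pi(s))$ and not to the transition probabilities; hence worst-case optimality of $\pi$ holds simultaneously for every $\theta \in \Theta_{\mathrm{gp}}$ (and $\aVal(\iota)$ itself is $\theta$-independent by \Cref{eqn:aval}). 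Likewise, $\{\tau_{\mathcal{I}} < \infty\}$ is a reachability event, and almost-sure reachability depends only on the support of the distributions, which all graph-preserving valuations share; so $\Pr^\pi_\theta\!\left(\tau_{\mathcal{I}} < \infty\right) = 1$ for every $\theta \in \Theta_{\mathrm{gp}}$ as well. This is precisely the step that upgrades the ``there exist $\theta$'' in the hypothesis to the ``for all $\theta$'' in the conclusion, and it is in essence the remark already stated right after \Cref{lem:almost_sure_strict_lower_bound}.

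Next I would fix an arbitrary $\theta \in \Theta_{\mathrm{gp}}$ and apply the $(\Leftarrow)$ direction of \Cref{lem:almost_sure_strict_lower_bound} with this $\theta$ and the policy $\pi$ (which, by the previous paragraph, still satisfies both required conditions). This yields a policy $\pi'$ with $\Pr^{\pi'}_\theta\!\left(\sum\nolimits_{t=0}^\infty \gamma^t R(s_t,a_t) > \aVal(\iota)\right) = 1$. Writing $Z = \sum\nolimits_{t=0}^\infty \gamma^t R(s_t,a_t)$ for the random discounted return under $\pi'$ in the instantiated MDP $P_\theta$, the random variable $Z - \aVal(\iota)$ is nonnegative almost surely, strictly positive almost surely, and integrable since $|Z| \leq V_{\max}$. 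An integrable random variable that is almost surely strictly positive has strictly positive expectation, so $V^{\pi'}_\theta(\iota) = \bE^{\pi'}_\theta[Z] > \aVal(\iota)$. Since $V_\theta(\iota) = \max_{\pi''} V^{\pi''}_\theta(\iota) \geq V^{\pi'}_\theta(\iota)$, we conclude $\aVal(\iota) < V_\theta(\iota)$; as $\theta$ was arbitrary, this is exactly the claim.

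The only genuinely non-routine points are the two I have flagged: first, citing that almost-sure reachability and the policy-restricted antagonistic value are functions of the underlying graph alone, which is what lets the hypothesis for one valuation transfer to all graph-preserving valuations; and second, the elementary measure-theoretic fact that an almost surely strictly positive integrable random variable has strictly positive expectation, which is what turns the almost-sure strict inequality provided by \Cref{lem:almost_sure_strict_lower_bound} into a strict inequality between $\aVal(\iota)$ and the expected return $V_\theta(\iota)$. Everything else is bookkeeping with the definitions. I expect the first point to be the part that needs the most care to phrase precisely, since it leans on the qualitative-reachability results cited after \Cref{lem:almost_sure_strict_lower_bound} rather than on anything established within the excerpt itself.
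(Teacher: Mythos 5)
Your proof is correct and follows essentially the same route as the paper's: transfer the hypothesis to all graph-preserving valuations via the support-only dependence of $\aVal^\pi$ and of almost-sure reachability, invoke the reverse implication of \Cref{lem:almost_sure_strict_lower_bound}, and pass to expectations to get $\aVal(\iota) < V^{\pi'}_\theta(\iota) \leq V_\theta(\iota)$. You are in fact slightly more explicit than the paper on the final step (that an integrable, almost surely strictly positive random variable has strictly positive expectation), which the paper leaves implicit.
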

It is easy to check, using \Cref{eqn:aval}, that any policy satisfying $\aVal(s) = \min_{s' \in \supp{s}{a}} R(s,a) + \gamma \aVal(s')$ for all $s$ and all $a \in \mathrm{supp}(\pi(s))$ is worst-case optimal, \ie, $\aVal^\pi(\iota) = \aVal(\iota)$. 
Conversely, for any worst-case optimal policy $\pi$, we can make local modifications to obtain a second worst-case optimal policy $\pi'$ satisfying the equations. 
These observations suggest the following two-step procedure to determine whether the conditions of \Cref{lem:strict-nonstrict} hold:
\begin{enumerate}
    \item Remove all state-action pairs $\tuple{s,a}$ for which we have $\aVal(s) > \min_{s' \in \supp{s}{a}} R(s,a) + \gamma \aVal(s')$;
    \item In the resulting pMDP, determine whether a transition from $\mathcal{I}$ is hit with probability $1$ under some policy $\pi$~\cite[Algorithm 45]{DBLP:books/daglib/0020348}.
\end{enumerate}
\noindent
The first step yields a sub-pMDP in which all policies are worst-case optimal. 
Moreover, any worst-case optimal policy has a corresponding policy in this sub-MDP. 
The second step tells us whether a policy to hit the improving transition almost-surely exists. 
Since the second step is realized on the sub-pMDP obtained from step $1$, we get a positive answer if and only if the conditions from \Cref{lem:strict-nonstrict} hold true 
(see \Cref{sec:strict-nonstrict}).

\subsection{Game-Based Pruning}
We use our results on game values to reduce the number of state-action pairs in the pMDP. 
Leveraging the bounds from \Cref{lem:expected_value_bound_for_policy,lem:strict-nonstrict}, we compare the antagonistic value of a state with the cooperative values of its successor states.

\begin{theorem}[aVal-cVal pruning]\label{thm:aval-cval-pruning}
Let $\pi$ be a policy and $\tuple{s,a}$ be a state-action pair. 
If  $a \in \mathrm{supp}(\pi(s))$ and
\begin{equation}\label{eqn:prune}
\mathbf{aVal}(s) > \max_{s' \in \supp{s}{a}} R(s,a) + \gamma \mathbf{cVal}(s'),
\end{equation}
then for all $\theta \in \Theta_{\mathrm{gp}}$ we have $V_\theta(s) > V_\theta^\pi(s)$. 
Moreover, if $\aVal^\pi(s) = \aVal(s)$ and $\Pr^\pi_\theta\left(\tau_{\mathcal{I}} < \infty \given s_0 = s\right) = 1$, for some $\pi$, the non-strict version of \Cref{eqn:prune} suffices.
\end{theorem}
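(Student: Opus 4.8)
The plan is to sandwich $V^\pi_\theta(s)$ and $V_\theta(s)$ between the game values so that~\Cref{eqn:prune} forces a strict separation. Fix $\theta \in \Theta_{\mathrm{gp}}$ and expand $V^\pi_\theta(s) = \sum_{a' \in A} \pi(a' \given s)\, Q^\pi_\theta(s,a')$, where $Q^\pi_\theta(s,a') = R(s,a') + \gamma \sum_{s' \in S} P_\theta(s' \given s,a')\, V^\pi_\theta(s')$. I would first record two bounds. \textbf{(i)} For every $a' \in A$, $Q^\pi_\theta(s,a') \le V_\theta(s)$: by optimality of $V_\theta$ we have $V^\pi_\theta \le V_\theta$ pointwise, hence $Q^\pi_\theta(s,a') \le Q_\theta(s,a') \le \max_{a'' \in A} Q_\theta(s,a'') = V_\theta(s)$. \textbf{(ii)} For the action $a$, since $\theta$ is graph-preserving we have $P_\theta(s' \given s,a) > 0$ exactly for $s' \in \supp{s}{a}$, so $\sum_{s'} P_\theta(s' \given s,a)\, V^\pi_\theta(s')$ is a convex combination of $\{V^\pi_\theta(s') : s' \in \supp{s}{a}\}$; combined with the upper bound $V^\pi_\theta \le \cVal$ of~\Cref{lem:expected_value_bound_for_policy} this gives $Q^\pi_\theta(s,a) \le R(s,a) + \gamma \max_{s' \in \supp{s}{a}} \cVal(s')$.

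For the first (strict) claim, chaining \textbf{(ii)}, the hypothesis~\Cref{eqn:prune}, and the lower bound $\aVal(s) \le V_\theta(s)$ of~\Cref{lem:expected_value_bound_for_policy} yields $Q^\pi_\theta(s,a) \le R(s,a) + \gamma \max_{s' \in \supp{s}{a}} \cVal(s') < \aVal(s) \le V_\theta(s)$, so $Q^\pi_\theta(s,a) < V_\theta(s)$ strictly. Because $a \in \mathrm{supp}(\pi(s))$ gives $\pi(a \given s) > 0$, while $Q^\pi_\theta(s,a') \le V_\theta(s)$ for every $a'$ by \textbf{(i)}, the convex combination $V^\pi_\theta(s) = \sum_{a' \in A} \pi(a' \given s)\, Q^\pi_\theta(s,a')$ lies strictly below $V_\theta(s)$ (one term has a positive weight and a strictly smaller factor), which is exactly $V_\theta(s) > V_\theta^\pi(s)$.

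For the ``moreover'' part the hypothesis~\Cref{eqn:prune} is only assumed with $\ge$, so I can no longer extract strictness from $\aVal(s) \le V_\theta(s)$ alone; instead I need the strict lower bound $\aVal(s) < V_\theta(s)$. This is precisely what the extra assumptions supply: renaming the witness policy $\hat\pi$, the conditions $\aVal^{\hat\pi}(s) = \aVal(s)$ and $\Pr^{\hat\pi}_\theta(\tau_{\mathcal I} < \infty \given s_0 = s) = 1$ let me invoke~\Cref{lem:almost_sure_strict_lower_bound} and~\Cref{lem:strict-nonstrict} with the root $\iota$ replaced by $s$ (their proofs are local to the fragment of the pMDP reachable from the state under consideration and use nothing specific to $\iota$; moreover almost-sure hitting depends only on the support, so the hit-time condition transfers to every graph-preserving valuation). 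This gives $\aVal(s) < V_\theta(s)$, after which $Q^\pi_\theta(s,a) \le R(s,a) + \gamma \max_{s' \in \supp{s}{a}} \cVal(s') \le \aVal(s) < V_\theta(s)$ is again strict, and the same convex-combination step as in the previous paragraph yields $V_\theta^\pi(s) < V_\theta(s)$.

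The routine chain of inequalities should be painless; the step I expect to require the most care is the re-rooting in the ``moreover'' part — making precise that~\Cref{lem:almost_sure_strict_lower_bound} and~\Cref{lem:strict-nonstrict}, stated for the initial state $\iota$, apply verbatim at an arbitrary state $s$, and matching this up with the ``for some $\pi$'' quantifier appearing in the statement. I would handle this by restating those two results for the sub-pMDP induced by the states reachable from $s$, taking $s$ as its initial state, and noting support-invariance of the hit-time condition.
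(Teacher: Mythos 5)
Your proof is correct and follows essentially the same route as the paper's: bound the $Q$-value of $\tuple{s,a}$ by $R(s,a)+\gamma\max_{s'\in\supp{s}{a}}\cVal(s')$ via a convex-combination argument, push it below $\aVal(s)\leq V_\theta(s)$ using \Cref{eqn:prune} and \Cref{lem:expected_value_bound_for_policy} (resp.\ the strict bound from \Cref{lem:strict-nonstrict} for the ``moreover'' part), and conclude with the convex combination over actions. The only difference is presentational: the paper factors the last two steps through \Cref{thm:aval-qval-pruning} and \Cref{thm:qval-qval-pruning} while you inline them with $Q^\pi_\theta$ in place of $Q_\theta$, and you are somewhat more explicit than the paper about re-rooting \Cref{lem:strict-nonstrict} at $s$ instead of $\iota$.
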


Essentially, \Cref{thm:aval-cval-pruning} states that the action $a$ will not be chosen in state $s$ by any optimal policy and can therefore be pruned. 
Formally, if the condition of the result is true then, by  \Cref{eqn:cval}, any policy $\pi$ that plays $a$ from $s$ has a performance $Q_\theta^\pi(s,a)$ that is strictly smaller than the (optimal) antagonistic value $\aVal(s)$. 
By \Cref{lem:expected_value_bound_for_policy}, it then follows that its performance $V^\pi_\theta(s)$ is suboptimal, irrespective of the valuation $\theta \in \Theta_{\mathrm{gp}}$. 
The second part follows from \Cref{lem:strict-nonstrict}.

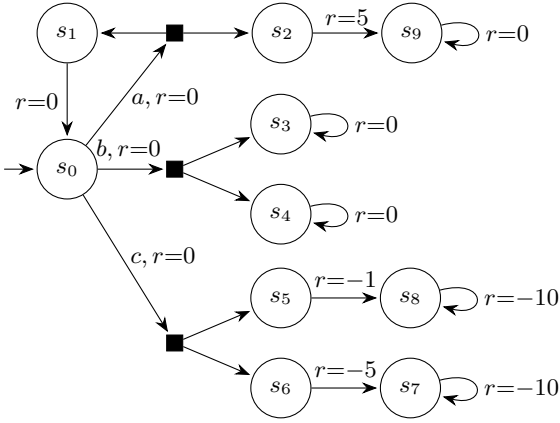
\begin{figure}[t]
\centering
    \begin{tikzpicture}[shorten >=1pt,auto,node distance=.9 cm, scale = 1.0, transform shape, ->]
        \tikzstyle{action} = [fill=black, shape=rectangle, draw]

        \node[state](s1){$s_1$};
        \node[state, below = 1cm of s1, initial=left, initial text={}](s0){$s_0$};
        \node[state, below = 1.5cm of s0, draw=none](invis){};

        \node[action](s0_a)[right= of s1]{};
        \node[action](s0_b)[right= of s0]{};
        \node[action](s0_c)[right= of invis]{};

        \node[state, right=of s0_a](s2){$s_2$};
        \node[state, above right=0.20cm and 1cm of s0_b](s3){$s_3$};
        \node[state, below right=0.20cm and 1cm of s0_b](s4){$s_4$};
        \node[state, above right=0.20cm and 1cm of s0_c](s5){$s_5$};
        \node[state, below right=0.20cm and 1cm of s0_c](s6){$s_6$};

        \node[state, right=of s6](s7){$s_7$};
        \node[state, right=of s5](s8){$s_8$};
        \node[state, right=of s2](s9){$s_9$};

        \draw (s0) edge [right, pos=0.45] node {$a, r{=}0$} (s0_a);
        \draw (s0) edge [above, pos=0.45] node {$b, r{=}0$} (s0_b);
        \draw (s0) edge [right, pos=0.45] node {$c, r{=}0$} (s0_c);

        \draw (s0_a) edge [above, pos=0.5]  (s1);
        \draw (s0_a) edge [above, pos=0.5] (s2);

        \draw[auto]
        (s0_b) edge (s3)
        (s0_b) edge [swap] (s4);
        
        \draw[auto]
        (s0_c) edge [pos=0.8] (s5)
        (s0_c) edge [swap] (s6);

        \draw (s1) edge [left, pos=0.5] node {$r{=}0$} (s0);
        \draw (s2) edge [pos=0.5] node {$r{=}5$} (s9);

        \draw (s3) edge [loop right] node {$r{=}0$} (s3);
        \draw (s4) edge [loop right] node {$r{=}0$} (s4);

        \draw (s5) edge [pos=0.5] node {$r{=}{-}1$} (s8);
        \draw (s6) edge [pos=0.5] node {$r{=}{-}5$} (s7);

        \draw (s7) edge [loop right] node {$r{=}{-}10$} (s7);
        \draw (s8) edge [loop right] node {$r{=}{-}10$} (s8);
        \draw (s9) edge [loop right] node {$r{=}0$} (s9);


    \end{tikzpicture}
    \vspace{1em}
    \caption{Example of a pMDP where both cases of aVal-cVal pruning apply (see \Cref{thm:aval-cval-pruning}).}
    \label{fig:example_prob_1}
    \vspace{3em}
\end{figure}

\begin{example}
    Consider the pMDP in \Cref{fig:example_prob_1} and let $\gamma = 0.95$. First, note that $\aVal(s_0) = \aVal(s_1)= 0$ and $\cVal(s_3) = \cVal(s_4) = 0$. If we apply the formula $\frac{r}{1-\gamma}$, we get $\cVal(s_7) = \aVal(s_7) = \cVal(s_8) = \aVal(s_8) = -200$, since $s_7$ and $s_8$ only depend on themselves. The states $s_5$ and $s_6$ depend on $s_8$ and $s_7$, respectively, and therefore $\cVal(s_5) = -191$ and $\cVal(s_6) = -195$.
    We thus have $\mathbf{aVal}(s_0) > -1 + \gamma \mathbf{cVal}(s_5)$ and $\mathbf{aVal}(s_0) > -5 + \gamma \mathbf{cVal}(s_6)$. 
    Therefore, by \Cref{thm:aval-cval-pruning}, the state-action pair $\tuple{s_0, c}$ can be removed. 
    Second, we have that the antagonistic value strictly improves along $\tuple{s_2, a, s_9}$, \ie, it is an element of $\mathcal{I}$, and it can be hit almost surely by playing action $a$ from $s_0$. 
    Therefore, the condition from the second part of \Cref{thm:aval-cval-pruning} is satisfied and $\tuple{s_0, b}$ can also be removed,
    since $\mathbf{aVal}(s_0) = 0 + \gamma \mathbf{cVal}(s_3)$ and $\mathbf{aVal}(s_0) > 0 + \gamma \mathbf{cVal}(s_4)$. 
\end{example}

\section{Parametric Value Function Analysis}\label{sec:smt-based_analysis}

In all techniques presented thus far, we have not yet fully exploited the parametric structure of pMDPs. 
Even in our definition of an MLE-MDP for pSPIBB, \Cref{eq:mle_param_sharing}, we do not yet consider all constraints implied by the sharing of parameters across transitions.

We proceed in this direction by leveraging the fact that both the state and state-action value functions $V_\theta$ and $Q_\theta$ are known to be rational functions of $\theta$~\cite{DBLP:conf/ictac/Daws04,DBLP:journals/fac/LanotteMT07}. 
For intuition, observe that the systems from \Cref{eq:state_values,eq:q_values} can be solved via algebraic manipulations, noting that probabilities are now polynomials over $\theta$. 
A complete algorithm to compute both functions is given in~\cite[Section 3]{DBLP:journals/iandc/BaierHHJKK20}. 

Instead of computing the rational functions, we encode the systems of polynomial constraints from \Cref{eq:state_values,eq:q_values} and use a \emph{satisfiability modulo theory} (SMT) solver to determine the (im)possibility of uncertain state-action pairs being useful to obtain $\pii$. 
In a nutshell, SMT solvers can check the satisfiability of first-order formulas over some \emph{theory}~\cite{DBLP:series/faia/BarrettSST21}. 
Here, the relevant theory is the \emph{existential theory of the reals} (ETR), which allows rational constants, multiplication, addition, and existentially quantified real-valued variables, see~\cite{DBLP:journals/corr/abs-2407-18006} for definitions and problems that can be encoded in ETR, including the encoding for pMDPs that we are alluding to.

\subsection{SMT-Based Pruning}
We now present a version of \Cref{thm:aval-cval-pruning} that compares 
the maximal value of a state over all possible valuations $\theta \in \Theta_{\mathrm{gp}}$ against its antagonistic value. 
Instead of the cooperative value, we now use a (sometimes) tighter upper bound on the value of the state to compare against the antagonistic value.

\begin{theorem}[aVal-Q pruning]\label{thm:aval-qval-pruning}
Let $\pi$ be a policy, $\theta \in \Theta_{\mathrm{gp}}$, and $\tuple{s,a}$ a state-action pair. 
If $a \in \mathrm{supp}(\pi(s))$ and
\begin{equation}\label{eqn:aval-q-prune}
\mathbf{aVal}(s) > Q_\theta(s,a),
\end{equation}
then $V_\theta(s) > V_\theta^\pi(s)$. 
Moreover, if $\aVal^\pi(s) = \aVal(s)$ and $\Pr^\pi_\theta\left(\tau_{\mathcal{I}} < \infty \given s_0 = s\right) = 1$, for some $\pi$, the non-strict version of \Cref{eqn:aval-q-prune} suffices.
\end{theorem}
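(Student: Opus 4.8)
The plan is to re-run the argument behind \Cref{thm:aval-cval-pruning} almost verbatim, replacing the cooperative over-estimate $\max_{s' \in \supp{s}{a}} R(s,a) + \gamma\cVal(s')$ by the sharper quantity $Q_\theta(s,a)$. Observe first that these two bounds are comparable: since $V_\theta(s') \le \cVal(s')$ for every $s'$ by \Cref{lem:expected_value_bound_for_policy} and $P_\theta(\cdot \given s,a)$ is a probability distribution, $Q_\theta(s,a) = R(s,a) + \gamma\sum_{s'} P_\theta(s'\given s,a)\,V_\theta(s') \le \max_{s' \in \supp{s}{a}} R(s,a) + \gamma\cVal(s')$. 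Hence \Cref{eqn:prune} implies \Cref{eqn:aval-q-prune} for every $\theta$, so \Cref{thm:aval-qval-pruning} strengthens \Cref{thm:aval-cval-pruning}, and it suffices to carry the proof through with the tighter bound.

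Next I would bound the performance of playing $a$ at $s$ under $\pi$. Since $V^\pi_\theta \le V_\theta$ pointwise (optimality of $V_\theta$) and $P_\theta(\cdot \given s,a) \ge 0$, monotonicity of the convex combination gives $Q^\pi_\theta(s,a) = R(s,a) + \gamma\sum_{s'} P_\theta(s'\given s,a)\,V^\pi_\theta(s') \le R(s,a) + \gamma\sum_{s'} P_\theta(s'\given s,a)\,V_\theta(s') = Q_\theta(s,a)$. Combining this with the hypothesis $\aVal(s) > Q_\theta(s,a)$ and with $\aVal(s) \le V_\theta(s)$ from \Cref{lem:expected_value_bound_for_policy} yields $Q^\pi_\theta(s,a) < \aVal(s) \le V_\theta(s)$, i.e.\ $a$ is strictly suboptimal at $s$ under $\pi$.

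Then I would transfer this strict gap to the state value. Writing $V^\pi_\theta(s) = \sum_{a' \in A} \pi(a' \given s)\,Q^\pi_\theta(s,a')$, the same monotonicity argument gives $Q^\pi_\theta(s,a') \le Q_\theta(s,a') \le \max_{a''}Q_\theta(s,a'') = V_\theta(s)$ for every action $a'$ (using \Cref{eq:state_values}), while for $a' = a$ the inequality is strict by the previous step. Since $\pi(a \given s) > 0$ — which is exactly $a \in \mathrm{supp}(\pi(s))$ — and $\sum_{a'}\pi(a'\given s) = 1$, the convex combination is strictly dominated: $V^\pi_\theta(s) < V_\theta(s)$, as claimed. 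For the non-strict (``moreover'') variant, the weakened hypothesis only yields $Q^\pi_\theta(s,a) \le Q_\theta(s,a) \le \aVal(s)$, so I would invoke the state-rooted version of \Cref{lem:strict-nonstrict}: the assumed existence of a worst-case-optimal-from-$s$ policy that almost surely (conditioned on $s_0 = s$) traverses an improving transition forces $\aVal(s) < V_\theta(s)$ for all $\theta \in \Theta_{\mathrm{gp}}$, restoring $Q^\pi_\theta(s,a) < V_\theta(s)$ so that the convex-combination step goes through unchanged.

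The only step that is not pure bookkeeping is the state-rooted reformulation of \Cref{lem:strict-nonstrict} (and, underneath it, of \Cref{lem:almost_sure_strict_lower_bound}), which are stated with the fixed initial state $\iota$; this follows by re-rooting the pMDP at $s$, and one must check that the almost-sure-hitting condition is phrased conditionally on $s_0 = s$ precisely so that this re-rooting is sound and that $\aVal(s)$ (being support-dependent only) is unaffected. Everything else is just the defining relations of $Q_\theta, V_\theta, Q^\pi_\theta, V^\pi_\theta$ together with monotonicity of convex combinations, exactly as in the proof of \Cref{thm:aval-cval-pruning}.
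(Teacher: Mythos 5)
Your proposal is correct and follows essentially the same route as the paper: use \Cref{lem:expected_value_bound_for_policy} to turn the hypothesis $\aVal(s) > Q_\theta(s,a)$ into $V_\theta(s) > Q_\theta(s,a)$, conclude via the convex-combination argument over $\pi(\cdot \given s)$ (which the paper has already factored out as \Cref{thm:qval-qval-pruning} and simply cites, whereas you re-derive it inline), and handle the non-strict variant by invoking \Cref{lem:strict-nonstrict} rooted at $s$ to recover the strict gap $\aVal(s) < V_\theta(s)$. Your remark that the re-rooting of \Cref{lem:strict-nonstrict} at $s$ deserves an explicit check is a fair observation that the paper glosses over, but it does not change the argument.
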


The argument why the result holds is similar to the one for \Cref{thm:aval-cval-pruning}. 
If the condition holds,
by (the parametric versions of) \Cref{eq:q_values,eq:state_values}, all policies $\pi$ that play $a$ from $s$ have a performance $V_\theta^\pi(s)$ strictly smaller than $\aVal(s)$. 
Thus, by \Cref{lem:expected_value_bound_for_policy}, its performance $V^\pi_\theta(s)$ is suboptimal. 
The complement of \Cref{eqn:aval-q-prune}, with valuations $\theta$ quantified existentially and constraints on $\theta$ to be an element of $\Theta_{\mathrm{gp}}$, can be encoded in the existential theory of the reals.

Finally, observe that \Cref{thm:aval-qval-pruning} can be further improved by comparing the values of state-action pairs against each other instead of against the antagonistic value of their source state. 
Instead of precomputing $\aVal(s)$, we can replace the left-hand side of \Cref{eqn:aval-q-prune} by the (encoding of) the actual value of the state. 
Correctness of the result follows from (the parametric versions of) \Cref{eq:q_values,eq:state_values} directly.
\begin{theorem}[Q-Q pruning]\label{thm:qval-qval-pruning}
Let $\pi$ be a policy, $\theta \in \Theta_{\mathrm{gp}}$, and $\tuple{s,a}$ a state-action pair. If $a \in \mathrm{supp}(\pi(s))$ and
$V_\theta(s) > Q_\theta(s,a)$, then $V_\theta(s) > V_\theta^\pi(s)$.
\end{theorem}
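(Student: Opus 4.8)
The plan is to treat this as the textbook fact that a stationary policy which places positive probability on a strictly suboptimal action is itself strictly suboptimal, instantiated at the fixed valuation $\theta$. The only ingredient that is not pure arithmetic is the pointwise domination $V_\theta^\pi(s') \le V_\theta(s')$ for all $s' \in S$, which I would either invoke as standard or re-derive: the policy-evaluation operator $T^\pi$ and the Bellman optimality operator $T^\star$ associated with the parametric versions of \Cref{eq:q_values,eq:state_values} (with probabilities read off from $\theta$) are both $\gamma$-contractions on the space of bounded functions $S \to \bR$, hence each has a unique fixed point, namely $V_\theta^\pi$ and $V_\theta$. Since $T^\pi V \le T^\star V$ pointwise for every $V$ and both operators are monotone, an induction showing $(T^\pi)^n \mathbf{0} \le (T^\star)^n \mathbf{0}$ and a passage to the limit gives $V_\theta^\pi \le V_\theta$. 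Because the Bellman operator is a contraction, the paper's ``least fixed point'' coincides with ``unique fixed point,'' so this is consistent with the setup.

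Given that, I would write $Q_\theta^\pi(s,a') = R(s,a') + \gamma \sum_{s' \in S} P_\theta(s' \given s,a')\, V_\theta^\pi(s')$ and use $V_\theta^\pi \le V_\theta$ together with $P_\theta(\cdot \given s,a') \ge 0$ to conclude $Q_\theta^\pi(s,a') \le Q_\theta(s,a')$ for every action $a'$; in particular $Q_\theta^\pi(s,a) \le Q_\theta(s,a)$, and for every $a'$ we also have $Q_\theta(s,a') \le \max_{a''} Q_\theta(s,a'') = V_\theta(s)$ by \Cref{eq:state_values}. Then I would combine these bounds in the decomposition $V_\theta^\pi(s) = \sum_{a'} \pi(a' \given s)\, Q_\theta^\pi(s,a')$: splitting off the term for $a$, the hypotheses $a \in \mathrm{supp}(\pi(s))$ (so $\pi(a \given s) > 0$) and $Q_\theta(s,a) < V_\theta(s)$ give $\pi(a \given s)\, Q_\theta^\pi(s,a) \le \pi(a \given s)\, Q_\theta(s,a) < \pi(a \given s)\, V_\theta(s)$, while each remaining term satisfies $\pi(a' \given s)\, Q_\theta^\pi(s,a') \le \pi(a' \given s)\, V_\theta(s)$; summing yields $V_\theta^\pi(s) < \sum_{a'} \pi(a' \given s)\, V_\theta(s) = V_\theta(s)$, which is the claim.

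I do not expect a real obstacle here: the entire content is the domination lemma $V_\theta^\pi \le V_\theta$, and everything else is manipulation of convex combinations. The only points requiring a little care are keeping track of where strictness comes from — it is produced solely by the single $a$-term, which is why $\pi(a \given s) > 0$ is essential — and, if one wants the statement in full generality, observing that the same argument goes through verbatim for history-dependent policies, since $V_\theta^\pi \le V_\theta$ holds for those as well.
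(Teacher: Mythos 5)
Your proposal is correct and follows essentially the same route as the paper's proof: both rest on the domination $Q_\theta^\pi(s,b) \le Q_\theta(s,b) \le V_\theta(s)$ and then split the convex combination $V_\theta^\pi(s) = \sum_{b} \pi(b \given s) Q_\theta^\pi(s,b)$ into the $a$-term (which supplies the strict inequality via $\pi(a \given s) > 0$) and the rest. The only difference is that you spell out the domination $V_\theta^\pi \le V_\theta$ via contraction and monotonicity of the Bellman operators, whereas the paper simply asserts it by definition of the two value functions.
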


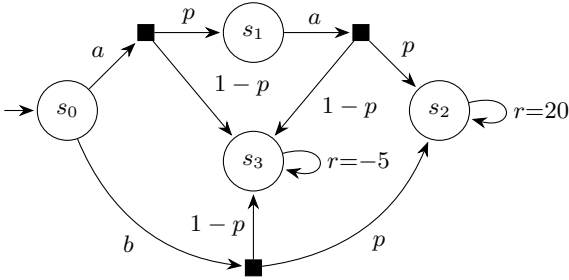
\begin{figure}[t]
\centering
    \begin{tikzpicture}[shorten >=1pt,auto,node distance=.9 cm, scale = 1.0, transform shape, ->]
        \tikzstyle{action} = [fill=black, shape=rectangle, draw]

        \node[state,initial left, initial text={}](s0){$s_0$};
        \node[action](s0_a)[above right= of s0]{};
        \node[state, right = of s0_a](s1){$s_1$};
        
        \node[action](s1_a)[right= of s1]{};
        \node[state, below right = of s1_a](s2){$s_2$};
        
        \node[state, below = of s1](s3){$s_3$};

        \node[action](s0_b)[below= of s3]{};

        \draw[auto] (s0) edge node {$a$} (s0_a);
        \draw[auto] (s0) edge [bend right,swap] node {$b$} (s0_b);
        \draw (s1) edge [above, pos=0.45] node {$a$} (s1_a);

        \draw[auto] (s0_a) edge node {$p$} (s1)
        (s0_a) edge[pos=0.65] node {$1-p$} (s3);

        \draw[auto] (s0_b) edge [swap,bend right] node {$p$} (s2)
        (s0_b) edge node {$1-p$} (s3);
        
        \draw[auto] (s1_a) edge  node {$1-p$} (s3)
        (s1_a) edge node {$p$} (s2);

        \draw (s2) edge [loop right] node {$r{=}20$} (s2);
        \draw (s3) edge [loop right] node {$r{=}{-}5$} (s3);
    \end{tikzpicture}
    \vspace{1em}
    \caption{Example of a pMDP where aVal-Q pruning fails but Q-Q pruning does apply (see \Cref{thm:aval-qval-pruning} and \Cref{thm:qval-qval-pruning}).}
    \label{fig:example_nwr_smt}
    \vspace{3em}
\end{figure}

\begin{example}
Consider the pMDP in \Cref{fig:example_nwr_smt} and let $\gamma = 0.9$. 
Since states $s_2$ and $s_3$ depend only on themselves, we can compute the worst case value as $\frac{r}{1-\gamma}$. We have $\aVal(s_2) = 20/0.1 = 200$ and $\aVal(s_3) = -5/0.1 = -50$. The $\aVal$ of the other states depends on the worst-case successor, and thus $\aVal(s_1)$ depends on $\aVal(s_3)$ and thus $\aVal(s_1) = 0 + 0.9(-50) = -45$. Similarly, $\aVal(s_0)$ also depends on $\aVal(s_3)$ and thus $\aVal(s_0) = 0 + 0.9(-50) = -45 < Q_\theta(s_0,a), Q_\theta(s_0,b)$, for all $\theta \in \Theta_{\mathrm{gp}}$, since playing either action yields some positive probability of reaching state $s_2$, where the expected discounted reward is positive. 
Hence, we cannot use \Cref{thm:aval-qval-pruning} to remove any action. 
By the same reasoning, however, we have $Q_\theta(s_0,a) \geq p^2$ and $p \geq Q_\theta(s_0,b)$. 
Since $p > p^2$ for all graph-preserving valuations, \ie, for all $p \in (0,1)$, then $V_\theta(s_0) = Q_\theta(s_0,b) < Q_\theta(s_1,a)$, thus by \Cref{thm:qval-qval-pruning}, we can remove $\tuple{s_0,a}$.
\end{example}

\begin{figure*}[ht!]
\centering 
\includegraphics[width=0.65\textwidth]{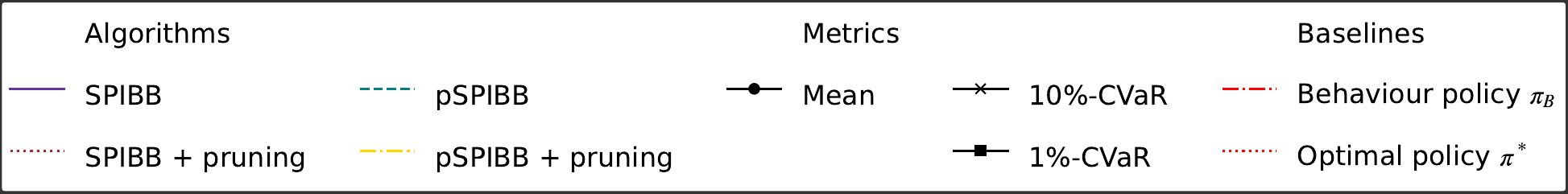}\\
\begin{subfigure}[b]{0.33\textwidth}
\centering
\includegraphics[width=\textwidth]{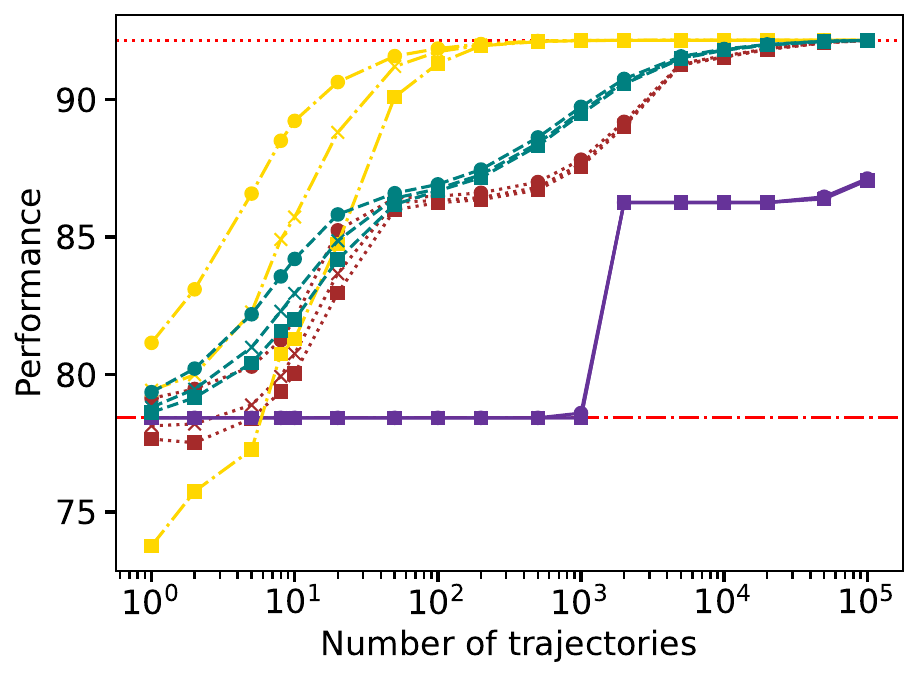}
\caption{Taxi, $\Nwedge=200$.}
\label{fig:experiments:taxi}
\end{subfigure}
\hfill
\begin{subfigure}[b]{0.33\textwidth}
\centering
\includegraphics[width=\textwidth]{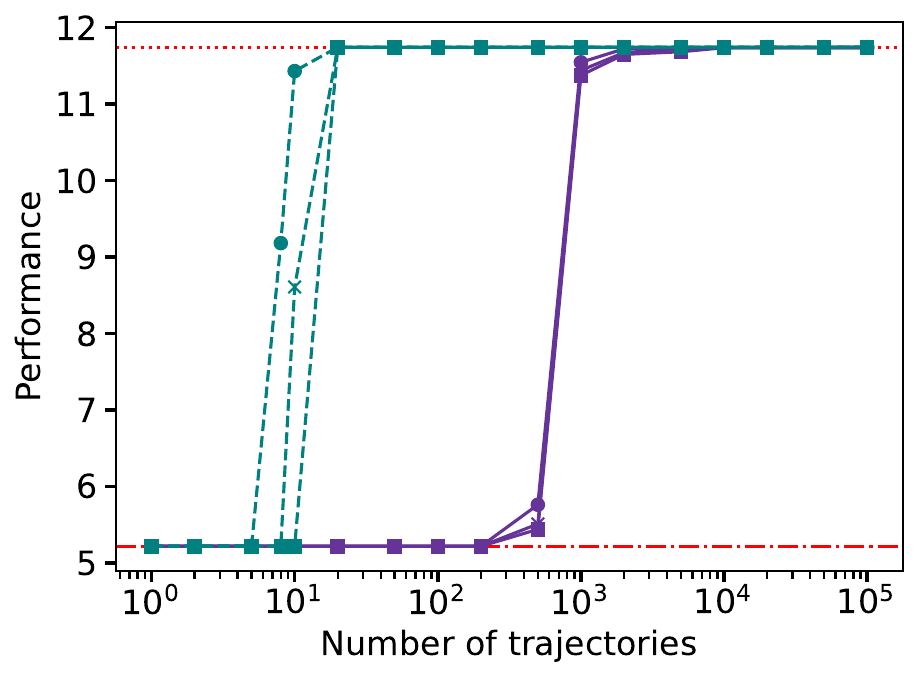}
\caption{Gridworld, $\Nwedge=200$.}
\label{fig:experiments:gridworld}
\end{subfigure}
\hfill
\begin{subfigure}[b]{0.33\textwidth}
\centering
\includegraphics[width=\textwidth]{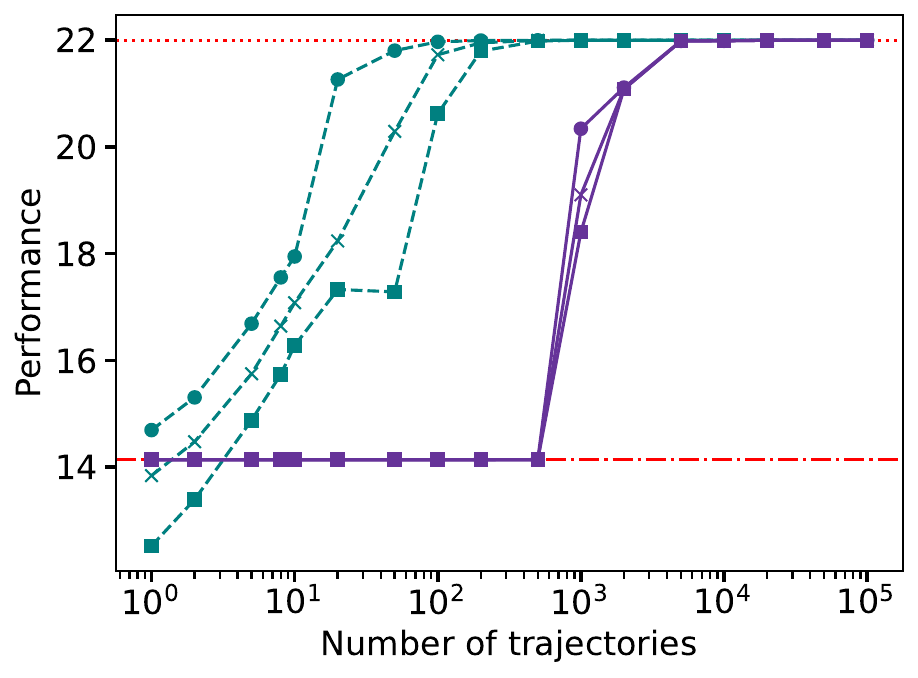}
\caption{Resource gathering, $\Nwedge=200$.}
\label{fig:experiments:resource}
\end{subfigure} \\
\vspace{2em}
\begin{subfigure}[b]{0.33\textwidth}
\centering
\includegraphics[width=\textwidth]{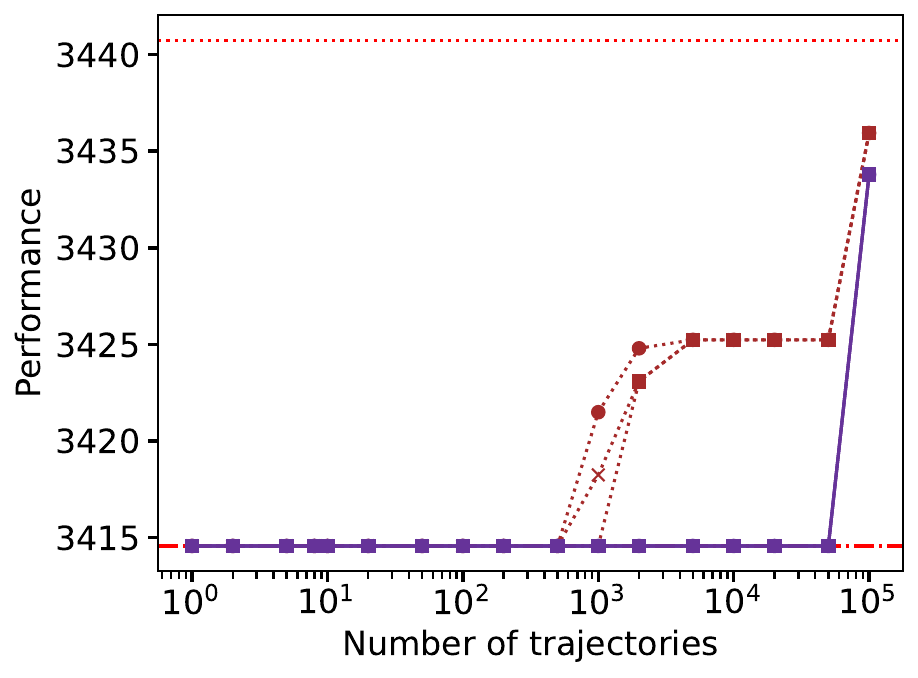}
\caption{Pac-Man, $\Nwedge=200$.}
\label{fig:experiments:pacman}
\end{subfigure}
\hfill
\begin{subfigure}[b]{0.33\textwidth}
\centering
\includegraphics[width=\textwidth]{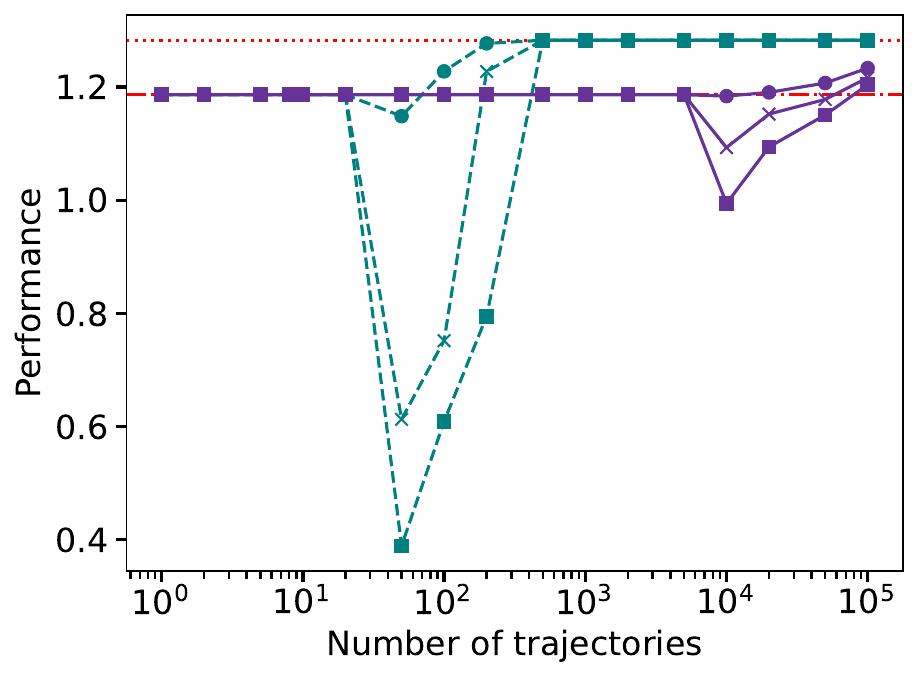}
\caption{RPS, $\Nwedge=200$.}
\label{fig:experiments:rps:200}
\end{subfigure}
\hfill
\begin{subfigure}[b]{0.33\textwidth}
\centering
\includegraphics[width=\textwidth]{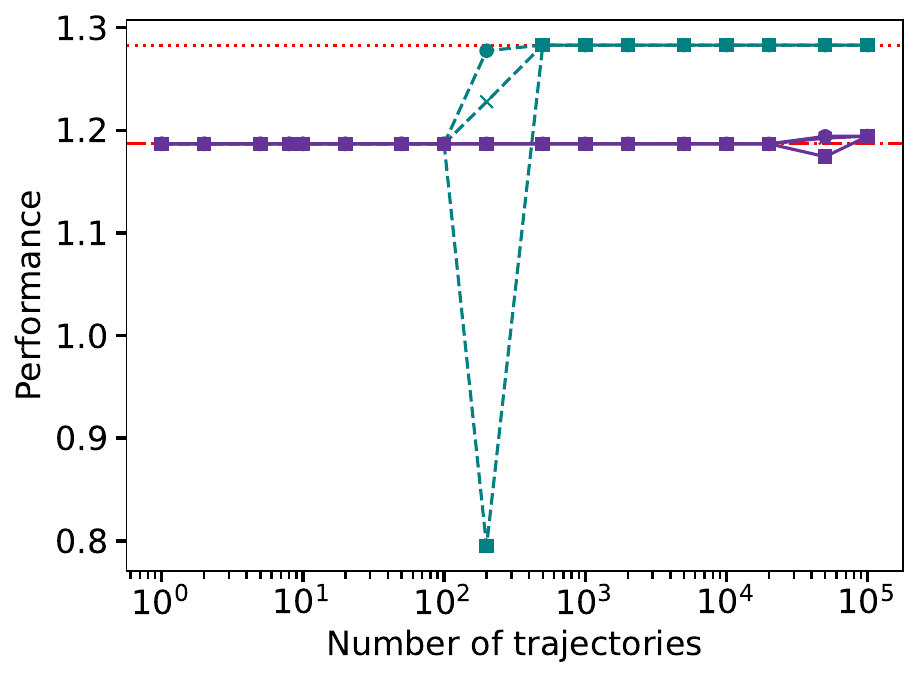}
\caption{RPS, $\Nwedge=1000$.}
\label{fig:experiments:rps:1000}
\end{subfigure}
\vspace{1em}
\caption{Comparison of methods on different environments for a specified $\Nwedge$ value.}
\label{fig:experiments}
\vspace{2em}  
\end{figure*}

\section{Experimental Evaluation}
\label{sec:experiments}
We empirically evaluate our contributions, pSPIBB, game-based pruning, and SMT-based pruning, compared to standard SPIBB~\cite{DBLP:conf/icml/LarocheTC19}.
The implementation and data is available at~\citep{experiments_zenodo}.

\paragraph{Environments.} 
We use the following standard benchmarks in our evaluation: \emph{Resource Gathering}~\cite{DBLP:conf/icml/BarrettN08}, \emph{Gridworld}~\cite{DBLP:conf/icml/LarocheTC19}, \emph{Pac-Man}~\cite{DBLP:conf/concur/0001KJSB20}, and \emph{Taxi}~\cite{DBLP:journals/jair/Dietterich00}.
Additionally, we implement a biased version of the \emph{Rock-Paper-Scissors} (RPS) game as a new benchmark.
In particular, the RPS benchmark models a $20$-round game where the agent plays against a stochastic opponent that plays with a bias the agent can exploit.
Full details on these environments can be found in \Cref{app:benchmak_overview}.

\subsection{Feasibility of SMT-Based Pruning}
\label{subsec:experiments:SMT:feasibility}
We start with evaluating the feasibility of SMT-based pruning.
Since SMT solving is known to be computationally expensive, we pose the research question:
\textbf{RQ1:} \emph{Is SMT-based pruning practically feasible?} 

\paragraph{Setup.}
We run SMT-based pruning on Gridworld to establish whether \emph{a single state-action pair} can be pruned, using the SMT-solvers \texttt{Z3}~\cite{DBLP:conf/tacas/MouraB08} and \texttt{cvc5}~\cite{DBLP:conf/tacas/BarbosaBBKLMMMN22}.
We set a timeout of $12$ hours. 
Both \texttt{Z3} and \texttt{cvc5} were invoked via their Python interfaces with default options. 
We did not use any advanced encodings or solver settings.

\paragraph{Results and Discussion.}
After a computation time of $6$ hours, \texttt{Z3} establishes that the state-action pair cannot be pruned, while \texttt{cvc5} timed out after $12$ hours.
From these results, we conclude that SMT-based pruning is (at present) practically infeasible, answering RQ1 negatively.
We emphasize, however, that our encoding of the conditions of \Cref{thm:aval-qval-pruning,thm:qval-qval-pruning} is quadratic in the number of states, actions, and transitions of the pMDP, and may be improved. 
Alternatively, tailor-made optimizations for these specific SMT instances could exist.

\subsection{Ablation of pSPIBB and Game-Based Pruning}
\label{subsec:experiments:ablation}

Next, we evaluate and compare the effects of our contributions regarding data efficiency.
Since we already established that SMT-based pruning is not practically feasible in \Cref{subsec:experiments:SMT:feasibility}, we omit this method from the comparison.
The central research question of this ablation study is
\textbf{RQ2:} \emph{Are pSPIBB, game-based pruning, and their combined use more data efficient than standard SPIBB?}  

\paragraph{Setup.}
We generate datasets of increasing size for each environment and compute the improved policy's performance across those datasets using each (combination of) our methods and standard SPIBB.
Whenever pSPIBB or game-based pruning has no (additional) effect because there are no parameters shared or state-action pairs to prune, we omit those curves from the plots. 
For all environments, we repeat the experiment for $1024$ different seeds.
We report the mean performance as well as the $10\%$ and $1\%$ conditional value at risk (CVaR)~\cite{rockafellar2000optimization}, which is the mean of the $10\%$ (resp. $1\%$) worst performing seeds.

\subsubsection{Results and Discussion}

\Cref{fig:experiments} shows the results obtained on all environments for $\Nwedge = 200$, with an additional experiment using $\Nwedge=1000$ for RPS.
The Taxi environment is the only benchmark that contains shared parameters and where game-based pruning removes state-action pairs.
There are no shared parameters in Pac-Man, and game-based pruning does not affect the other three environments.

\paragraph{Pruning and pSPIBB increase data efficiency. }
In \Cref{fig:experiments:taxi,fig:experiments:pacman} we see how pruning is an effective preprocessing method that can improve SPIBB's data efficiency by two orders of magnitude.
In the other environments, pruning had no effect.
Exploiting shared parameters as done in pSPIBB also increases data efficiency compared to standard SPIBB, as seen in \Cref{fig:experiments:taxi,fig:experiments:rps:200,fig:experiments:rps:1000,fig:experiments:gridworld,fig:experiments:resource}. In Gridworld and RPS we can see an improvement of more than two orders of magnitude in data efficiency.
The Pac-Man environment does not contain any shared parameters. 
Thus, pSPIBB coincides with SPIBB in this environment.
Finally, as showcased in \Cref{fig:experiments:taxi}, the combination of pSPIBB with game-based pruning delivered the greatest improvement in data efficiency, highlighting the benefits of combined usage of our contributions and positively answering RQ2.

\paragraph{Discussion of decreased performance. }
In Taxi, Resource Gathering, and RPS, a decrease in performance compared to the behavior policy occurs for some seeds, as evidenced by the $10\%$ and $1\%$-CVaR curves.
These decreases are more severe when using pSPIBB compared to standard SPIBB.
This behavior occurs because a wrong maximum likelihood estimate based on the data affects multiple state-action pairs simultaneously, exacerbating effects that may also occur in standard SPIBB.
Nonetheless, we emphasize the following two observations. 
First, increasing the $\Nwedge$ hyperparameter suppresses this effect while still maintaining the gains in data efficiency of our contributions, as evidenced by \Cref{fig:experiments:rps:200,fig:experiments:rps:1000}, and second, all observed decreases in performance are still within the allowed performance loss $\zeta$.
For instance, assuming an error $\delta=0.05$, we obtain in the Taxi environment an allowed loss $\zeta \approx 2895.02$ and in RPS we obtain $\zeta \approx 41.36$ and $\zeta \approx 19.09$ for $\Nwedge = 200$ and $1000$, respectively. 
All curves are well within the range of the behavior policy's performance $\pm \zeta$.

\section{Related Work}
In addition to the main SPI works mentioned in the introduction, significant work has been done to make SPI methods applicable in broader settings.
Most notably, multi-objective~\cite{DBLP:conf/nips/SatijaTPL21}, non-stationary~\cite{DBLP:conf/nips/ChandakJTWT20}, partially observable~\cite{DBLP:conf/aaai/SimaoS023}, and multi-agent~\cite{DBLP:conf/icml/0002ZCSSF24} settings.  
For a recent overview of SPI methods, we refer to~\cite{DBLP:conf/icaart/SchollDOU22}.

Since collecting more data is not an option, data efficiency is of great importance in offline RL settings such as SPI.
To that end, (offline) RL methods have been integrated with additional domain knowledge such as factored state spaces~\cite{DBLP:conf/aaai/SimaoS19} or (parametric) graph structures~\cite{DBLP:conf/ijcai/SimaoS19,DBLP:journals/corr/abs-2408-03093}, similar to our approach.
Alternatively, less conservative statistical methods to derive guarantees have been used in SPI~\cite{DBLP:conf/ijcai/WienhoftSSDB023} and more general learning settings~\cite{DBLP:journals/corr/abs-2404-05424}.
Another bottleneck of SPI(BB) is that most methods are tabular, \ie, they explicitly build an MLE-MDP from the dataset.
To alleviate this drawback,~\cite{DBLP:conf/icml/Castellini0ZSFS23} employ Monte Carlo Tree Search to enable SPIBB in large state spaces.

Game-based abstractions have been used in safety-constrained planning for partially observable MDPs~\cite{DBLP:conf/aaai/Chatterjee0PRZ17}.
This work was later extended to probabilistic constraints in~\cite{DBLP:conf/ijcai/ChatterjeeE0R18}.

\section{Conclusion}
We presented three new contributions that exploit parametric dependencies in the underlying environment to make safe policy improvement more data-efficient.
Our experimental evaluation showed that parametric SPIBB and game-based pruning (and their combination) are practical and efficient techniques, increasing data efficiency by two orders of magnitude.
In contrast, SMT-based pruning, while theoretically promising, is practically infeasible at present.

Future work will focus on increasing the computational efficiency of SMT-based pruning using more advanced encodings or other solvers.
In this direction, we highlight that our code is agnostic with respect to the actual SMT-solver used. 
Integrating additional SMT-solvers can be done by simply implementing the small SMT-solver interface we provided. 
New benchmarks can also be easily added.
Additionally, we aim to further increase the data-efficiency of SPI(BB) by integrating our methods with others such as structure learning~\cite{DBLP:conf/ijcai/SimaoS19}.
Another direction is to overapproximate the parametric distributions by intervals, reducing it to a robust MDP~\cite{DBLP:journals/mor/WiesemannKR13,DBLP:conf/birthday/SuilenBB0025}.
Such an approximation would increase computational efficiency at the cost of being more conservative, where the exact benefits of this trade-off remain an open question.



\begin{ack}
This work was supported by the FWO ``SynthEx'' project (G0AH524N).
We ran our experiments on the GPULab platform provided by IDLab (\url{https://gpulab.ilabt.imec.be/}).
We would also like to thank Nils Jansen and Sebastian Junges for their feedback on earlier versions of this work.
\end{ack}



\bibliography{references}

\clearpage
\newpage

\appendix
\onecolumn
\section{Proof of \Cref{lem:expected_value_bound_for_policy}}

Consider a pMDP $\tuple{S, \iota, A, P, X, R,  \gamma}$. 
Let $s \in S$ and $\theta \in \Theta_{\mathrm{gp}}$ be arbitrary. 
We will now prove that the following inequalities hold:
\[
 \aVal(s) \leq V_\theta(s) \leq \mathbf{cVal}(s).
\]
First, recall that:
\[
    V_\theta(s) = \sup_{\pi} \bE \left[ \sum_{t=0}^{\infty} \gamma^t R(s_t,a_t) \right],
\]
where the expectation is taken over the usual probability space, where trajectories are outcomes for Markov chains (indeed, having fixed $\pi$ in the MDP induced by the valuation $\theta$, we are dealing with a Markov chain). 
Further, recall that the expectation operator is monotonic. 
That is, for arbitrary random variables $X$ and $Y$, if $X \leq Y$ holds with probability $1$ and $\bE[X], \bE[Y]$ both exist then $\bE[X] \leq \bE[Y]$.

Now, let $\pi$ be an arbitrary policy and $\rho = s_0 a_0 s_1 \dots$ a trajectory consistent with $\pi$ such that $s_0 = s$. 
Observe that, by definition of the antagonistic and cooperative values, we have:
\[
    \aVal^\pi(s) \leq \sum^\infty_{t=0} \gamma^t R(s_t,a_t) \leq \cVal^\pi(s).
\]
The inequalities above hold \emph{surely} and therefore also hold \emph{almost surely}. 
Since the expected value of a constant is that same constant, by the above arguments, we get:
\[
    \aVal^\pi(s) = \bE[\aVal^\pi(s)] \leq \bE\left[\sum^\infty_{t=0} \gamma^t R(s_t,a_t)\right] \leq \bE[\cVal^\pi(s)] = \cVal^\pi(s).
\]
Finally, since $\pi$ was arbitrary, then by definition of supremum, we obtain:
\[
\sup_\pi\aVal^\pi(s) \leq \sup_\pi\bE\left[\sum^\infty_{t=0} \gamma^t R(s_t,a_t)\right] \leq \sup_\pi \cVal^\pi(s),
\]
as required.\qed

\section{Proof of \Cref{lem:almost_sure_strict_lower_bound}}
Consider a pMDP $\tuple{S, \iota, A, P, X, R, \gamma}$.
Let $\theta \in \Theta_\mathsf{gp}$. 
We will prove that there is a policy $\pi$ such that
\[
\Pr^\pi_\theta\left(\sum_{t=0}^\infty \gamma^t R(s_t,a_t) > \aVal(\iota)\right) = 1,
\]
if and only if there is a policy $\pi$ such that $\aVal^\pi(\iota) = \aVal(\iota)$ and $\Pr^\pi_\theta\left(\tau_{\mathcal{I}} < \infty \right) = 1$.

\subsection{Preliminaries}
We begin with a lemma that will greatly simplify our argument to establish the claim. 
The first part should be intuitive to the reader since it just says that every action played before (eventually) hitting $\mathcal{I}$ does not decrease the antagonistic value. 
The second part is less trivial: When a transition from $\mathcal{I}$ is traversed, the (local) antagonistic value may actually increase. 
Hence, a worst-case optimal policy $\pi$ may choose to play sub-optimally (locally) while still ensuring the best antagonistic value from $\iota$. 
Thankfully, the intuitive property that $\pi$ can always choose to play optimally from every reached state and not just the initial one does hold true.
\begin{lemma}\label{lem:cali}
    Let $\pi$ be a policy with $\aVal^\pi(\iota) = \aVal(\iota)$. 
    Consider a trajectory $s_0 a_0 s_1 \dots$ consistent with $\pi$ and write $\tau_{\mathcal{I}} \in \mathbb{N} \cup \{\infty\}$ for its first hit time to $\mathcal{I}$. 
    Then,
    \begin{equation}\label{eqn:before-cali}
        \forall T < \tau_{\mathcal{I}} : \sum_{t = 0}^{T} \gamma^t R(s_t,a_t) + \gamma^{T+1}\aVal^\pi(s_{T+1}) = \aVal^\pi(s_T) = \aVal(s_T).
    \end{equation}
    Moreover, there exists a second policy $\mu$ with $\aVal^\mu(\iota) = \aVal(\iota)$, $\Pr_\theta^\mu(\tau_{\mathcal{I}} < \infty) = \Pr_\theta^\pi(\tau_{\mathcal{I}} < \infty)$, and such that for all trajectories $s_0 a_0 s_1 \dots$ consistent with it have:
    \begin{equation}\label{eqn:strict-cali}
        \forall T \in \mathbb{N} : \sum_{t = T}^\infty \gamma^t R(s_t,a_t) \geq \aVal^\mu(s_T) = \aVal(s_T),
    \end{equation}
    with the inequality being strict if and only if $\tuple{s_T,a_T,s_{T+1}} \in \mathcal{I}$.
\end{lemma}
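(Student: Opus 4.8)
The plan is to prove the two parts separately, with the first being essentially a direct unfolding of the Bellman equations and the second a surgical modification of $\pi$. For \Cref{eqn:before-cali}: fix a trajectory $s_0 a_0 s_1 \dots$ consistent with $\pi$ and let $T < \tau_{\mathcal I}$, so that $\tuple{s_t,a_t,s_{t+1}} \notin \mathcal I$ for all $t \le T$ (in fact for all $t < \tau_{\mathcal I}$). By definition of $\mathcal I$ this means $\aVal(s_t) \ge R(s_t,a_t) + \gamma \aVal(s_{t+1})$; combined with $a_t \in \mathrm{supp}(\pi(s_t))$ and \Cref{eqn:aval-pol}, which forces $\aVal^\pi(s_t) \le R(s_t,a_t)+\gamma\aVal^\pi(s_{t+1})$, and the fact $\aVal^\pi \le \aVal$ pointwise (any $\pi$-consistent choice is a valid choice in the $\min$ of \Cref{eqn:aval}), we get that along this prefix the chain of inequalities $\aVal^\pi(s_t) \le R(s_t,a_t)+\gamma\aVal^\pi(s_{t+1}) \le R(s_t,a_t)+\gamma\aVal(s_{t+1}) \le \aVal(s_t)$ must collapse to equalities, because telescoping from $t=0$ and using $\aVal^\pi(\iota)=\aVal(\iota)$ leaves no slack. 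Hence $\aVal^\pi(s_T)=\aVal(s_T)$ and the one-step identity $\aVal^\pi(s_t) = R(s_t,a_t)+\gamma\aVal^\pi(s_{t+1})$ holds for every $t < \tau_{\mathcal I}$; unrolling it from $T$ onward (this is where we need that no $\mathcal I$-transition is ever hit \emph{beyond} $T$ as well — so actually the clean statement uses $T<\tau_{\mathcal I}$ and unrolls through all $t\ge T$ only when $\tau_{\mathcal I}=\infty$; when $\tau_{\mathcal I}<\infty$ the sum $\sum_{t=T}^\infty\gamma^tR(s_t,a_t)$ is split at $\tau_{\mathcal I}$ and the tail is handled by the moreover-part's argument applied from $s_{\tau_{\mathcal I}}$) yields $\sum_{t=T}^\infty \gamma^t R(s_t,a_t) = \aVal^\pi(s_T)$.

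For the moreover part, the idea is to build $\mu$ from $\pi$ by \emph{restricting the support at every state to worst-case-optimal successors}. Concretely, on the sub-pMDP obtained by deleting every state-action pair $\tuple{s,a}$ with $\aVal(s) > \min_{s'\in\mathrm{supp}(s,a)} R(s,a)+\gamma\aVal(s')$ (the step-1 construction discussed after \Cref{lem:strict-nonstrict}), define $\mu(a\mid s) \propto \pi(a\mid s)$ on the surviving actions at $s$ — this is well-defined because $\pi$ being worst-case optimal implies, by the first part, that $\pi$ already only uses surviving actions on states reachable before hitting $\mathcal I$, and we may define $\mu$ arbitrarily (but only via surviving actions) elsewhere. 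In this sub-pMDP every policy is worst-case optimal, so $\aVal^\mu(\iota)=\aVal(\iota)$ and moreover $\aVal^\mu(s)=\aVal(s)$ for \emph{every} state $s$, which gives the equality $\aVal^\mu(s_T)=\aVal(s_T)$ in \Cref{eqn:strict-cali}. For the inequality $\sum_{t=T}^\infty \gamma^t R(s_t,a_t) \ge \aVal^\mu(s_T)$ with equality iff no $\mathcal I$-transition occurs at or after $T$: at each step $\tuple{s_t,a_t,s_{t+1}}$ we now have $\aVal(s_t) = \min_{s'} R(s_t,a_t)+\gamma\aVal(s') \le R(s_t,a_t)+\gamma\aVal(s_{t+1})$ with equality iff $\tuple{s_t,a_t,s_{t+1}}\notin\mathcal I$; telescoping $\sum_{t=T}^\infty \gamma^t R(s_t,a_t) = \sum_{t=T}^\infty \gamma^t(\aVal(s_t)-\gamma\aVal(s_{t+1})) + \sum_{t=T}^\infty \gamma^t(R(s_t,a_t)-\aVal(s_t)+\gamma\aVal(s_{t+1}))$, where the first sum telescopes to $\gamma^T\aVal(s_T)$ (using boundedness of $\aVal$ and $\gamma<1$ so $\gamma^t\aVal(s_{t+1})\to 0$) and every term of the second sum is $\ge 0$ by the displayed step inequality, with a term being strictly positive exactly when its transition lies in $\mathcal I$. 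Dividing by $\gamma^T$ gives \Cref{eqn:strict-cali}. Finally, $\Pr_\theta^\mu(\tau_{\mathcal I}<\infty) = \Pr_\theta^\pi(\tau_{\mathcal I}<\infty)$ because $\mu$ and $\pi$ have the same support at every state reachable before $\tau_{\mathcal I}$ — the renormalization changes probabilities but not supports — and almost-sure reachability of $\mathcal I$ depends only on the support graph (as noted in the main text); one has to be slightly careful that $\mu$'s choices \emph{after} hitting $\mathcal I$ don't matter for the event $\{\tau_{\mathcal I}<\infty\}$, which is immediate since that event is determined by the prefix up to the first hit.

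The main obstacle I expect is the bookkeeping around \emph{what happens after $\tau_{\mathcal I}<\infty$}: \Cref{eqn:before-cali} as literally stated only constrains the trajectory strictly before the first hit, and when $\tau_{\mathcal I}<\infty$ the infinite sum $\sum_{t=T}^\infty\gamma^tR(s_t,a_t)$ includes the post-hit tail, whose value is \emph{not} pinned down by $\pi$'s worst-case optimality alone (it could be strictly larger than $\aVal(s_{\tau_{\mathcal I}})$). So the clean way to organize the write-up is: prove the telescoping identity of the moreover-part \emph{first} (it is the robust statement, valid at every $T$ and for the support-restricted $\mu$), then derive \Cref{eqn:before-cali} for $\pi$ as the special case where, before $\tau_{\mathcal I}$, all the step-inequalities are forced to equalities by the telescoping-from-$\iota$ argument of the first paragraph — i.e.\ the first part is really a corollary of the machinery built for the second. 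The other delicate point is verifying that $\mu$'s support genuinely coincides with $\pi$'s on the pre-$\tau_{\mathcal I}$ region, for which we invoke the first part to know $\pi$ uses only surviving (locally worst-case-optimal) actions there, so the renormalization in the definition of $\mu$ is over the full support of $\pi(s)$ and changes nothing at those states.
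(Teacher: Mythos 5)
Your plan is correct and follows essentially the same route as the paper's proof: part one is the induction/telescoping on \Cref{eqn:aval} and \Cref{eqn:aval-pol} forced by $\aVal^\pi(\iota)=\aVal(\iota)$, and part two constructs $\mu$ by restricting $\pi$ to locally worst-case-optimal actions so that $\aVal^\mu(s)=\aVal(s)$ holds at every state, observing that this change only touches states reached via $\mathcal{I}$-transitions and hence preserves $\Pr(\tau_{\mathcal{I}}<\infty)$. Your write-up is considerably more explicit than the paper's (which compresses both inductions into one sentence each), and your telescoping identity for \Cref{eqn:strict-cali} is exactly the right way to make the ``strict iff in $\mathcal{I}$'' clause precise.

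One caveat on the point you flag yourself. You are right that \Cref{eqn:before-cali}, read literally with the infinite sum, is only provable when $\tau_{\mathcal{I}}=\infty$: when $\tau_{\mathcal{I}}<\infty$, the post-hit tail of the trajectory under $\pi$ is not pinned down by worst-case optimality from $\iota$ (states entered only via $\mathcal{I}$-transitions are unconstrained), and it can make the sum strictly larger \emph{or} strictly smaller than $\gamma^T\aVal(s_T)$. Your proposed patch --- handling the tail ``by the moreover-part's argument applied from $s_{\tau_{\mathcal{I}}}$'' --- does not rescue the literal statement for $\pi$, since that argument applies to $\mu$ and not to $\pi$. The right resolution is the one you also identify: the content of \Cref{eqn:before-cali} that is actually needed (and is what the paper invokes in the proof of \Cref{lem:almost_sure_strict_lower_bound}) is the one-step identity $\aVal(s_t)=R(s_t,a_t)+\gamma\aVal(s_{t+1})$ for $t<\tau_{\mathcal{I}}$, equivalently the partial-sum form $\sum_{t=T}^{\tau_{\mathcal{I}}-1}\gamma^tR(s_t,a_t)+\gamma^{\tau_{\mathcal{I}}}\aVal(s_{\tau_{\mathcal{I}}})=\gamma^T\aVal(s_T)$; state and prove that version and everything downstream goes through. (The same $\gamma^T$ normalization issue is present in the lemma as printed; your ``divide by $\gamma^T$'' reading is the intended one.)
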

\begin{proof}
    The first part of the result follows from induction on the definition of $\aVal^\pi$, \ie,
    \[
        \aVal^\pi(s) = \inf_\rho \sum_{t=0}^\infty \gamma^t R(s_t,a_t),
    \]
    where $\rho$ ranges over trajectories consistent with $\pi$,
    \Cref{eqn:aval}, 
    and \Cref{eqn:aval-pol}. 
    
    For the second part, we modify $\pi$ to play in a worst-case optimal fashion even after the first transition from $\mathcal{I}$ is traversed. (Remember that a worst-case optimal policy for one state is not necessarily worst-case optimal for all states simultaneously. Instead, we will construct a policy that is also worst-case optimal for all states reached after $\mathcal{I}$.)
    That is, we change action choices so that $\aVal^\mu(s) = \aVal(s)$ holds for all $s \in S$. 
    This is an easy local change that is always possible since $\aVal$ is the unique solution to \Cref{eqn:aval} (cf.~\cite[Sec. 5]{DBLP:journals/tcs/ZwickP96}). 
    Note that it also preserves the probability of hitting $\mathcal{I}$ since the change is locally applied to actions chosen from states that are only ever reached from $\iota$ via transitions from $\mathcal{I}$ (otherwise, the first part of the claim would make it so that no change is actually needed). 
    Finally, the inequalities then follow from the definition of $\mathcal{I}$ and again an induction on \Cref{eqn:aval} and \Cref{eqn:aval-pol}.
\end{proof}

\subsection{Proof of the claim}
We begin with the reverse implication. 
That is, suppose there is a policy $\pi$ with $\aVal^\pi(\iota) = \aVal(\iota)$ such that $\Pr_\theta^\pi(\tau_{\mathcal{I}} < \infty) = 1$. 
Without loss of generality, we will also assume our policy enjoys the properties from $\mu$ in \Cref{lem:cali}. 
Now, consider an arbitrary trajectory $\rho = s_0 a_0 s_1 \dots$ consistent with $\pi$ and such that it eventually traverses a transition from $\mathcal{I}$. 
Write $\tau_{\mathcal{I}}$ for the first hit time of $\mathcal{I}$. 
The following hold:
\begin{align}
    \sum_{t=0}^\infty \gamma^t R(s_t,a_t) = {} & \sum_{t=0}^{\tau_{\mathcal{I}}-1} \gamma^t R(s_t,a_t) + \sum_{t=\tau_{\mathcal{I}}}^\infty \gamma^t R(s_t,a_t)\\
    {} > {} & \sum_{t=0}^{\tau_{\mathcal{I}}-1} \gamma^t R(s_t,a_t) + \gamma^{\tau_{\mathcal{I}}}\aVal(s_{\tau_{\mathcal{I}}}) & \text{by \Cref{lem:cali}, \Cref{eqn:strict-cali}}\\
    {} = {} & \aVal(s_0) & \text{by induction on \Cref{lem:cali}, \Cref{eqn:before-cali}}
\end{align}
To conclude, write $A$ for the event corresponding to having a trajectory with a value strictly larger than $\aVal(\iota)$ and $B$ for the event of having a finite hit time to $\mathcal{I}$. 
Recall that we assumed $\Pr_\theta^\pi(B) = 1$ and we just established that $\Pr_\theta^\pi(A \given B) = 1$. 
Hence, 
\[
    \Pr_\theta^\pi(A) \geq \Pr_\theta^\pi(A \cap B) = \Pr_\theta^\pi(B) \Pr_\theta^\pi(A \given B) = 1,
\]
and we get the desired result.

To prove the forward implication, we first argue that $\pi$ must be worst-case optimal. 
That is, we claim that $\aVal^\pi(\iota) = \aVal(\iota)$. 
Towards a contradiction, assume there is a trajectory $\rho = s_0 a_0 s_1 \dots$ consistent with $\pi$ such that $s_0 = \iota$ and $\sum_{t=0}^\infty \gamma^t R(s_t,a_t) < \aVal(s_0)$. 
The trajectory can be chosen so that there is some $T \in \mathbb{N}$ such that $\aVal(s_0) - \sum_{t=0}^{T-1}\gamma^t R(s_t,a_t) > \gamma^T\frac{R_{\max}}{1 - \gamma}$. 
This is because the prefixes of $\sum_{t=0}^{\infty}\gamma^t R(s_t,a_t)$ are converging towards a value that is strictly smaller than $\aVal(s_0)$ and after some large enough $T$, the gap becomes irrecoverable.
Hence, after the first divergence from the antagonistic value, discounting ensures such a $T$ indeed exists.
Consider the set of all trajectories consistent with $\pi$ and such that they have $s_0 a_0 s_1 \dots a_{T-1} s_T$ as a prefix. 
Note that the measure of this event, which we denote $\rho_T$ is nonzero, \ie,
\[
    \Pr_\theta^\pi(\rho_T) = \prod_{t=0}^{T-1} P_\theta(s_t,a_t,s_{t+1}).
\]
The fact that it is nonzero follows from our use of graph-preserving valuations only. 
Now, from our choice of trajectory and $T$, we get that all trajectories in $\rho_T$ are such that their discounted sum of rewards is strictly smaller than $\aVal(s_0)$. 
We get a contradiction with our assumption that almost all trajectories have a discounted sum of rewards that is, instead, strictly \emph{larger}.

We have proved that $\pi$ is worst-case optimal. 
It remains to argue that $\Pr_\theta^\pi(\tau_{\mathcal{I}} < \infty) = 1$.
Towards a contradiction, suppose some nonzero measure of trajectories never hits $\mathcal{I}$. 
By \Cref{lem:cali}, \Cref{eqn:before-cali}, we get that the discounted sum of rewards of such trajectories is exactly $\aVal(\iota)$. 
However, this means we get the same contradiction as above: we have a nonzero measure of trajectories whose discounted sum of rewards is not strictly larger than $\aVal(\iota)$, which goes against our initial assumption.\qed

\section{Proof of \Cref{lem:strict-nonstrict}}
Suppose $\aVal^\pi(\iota) = \aVal(\iota)$ and $\Pr^\pi_\theta\left(\tau_{\mathcal{I}} < \infty \right) = 1$ for some policy $\pi$, and recall that the concrete choice of $\theta$ is not relevant for the latter. 
Now, from the reverse implication of \Cref{lem:almost_sure_strict_lower_bound} we get that there is some policy $\pi$ such that:
\[
\Pr^\pi_\theta\left(\sum_{t=0}^\infty \gamma^t R(s_t,a_t) > \aVal(\iota)\right) = 1.
\]
It follows that:
\[
    V_\theta^\pi(\iota) = \bE\left[\sum^\infty_{t=0} \gamma^t R(s_t,a_t)\right] > \aVal(\iota),
\]
whence $\aVal(\iota) < V_\theta^\pi(\iota) \leq V_\theta(\iota)$ as required.\qed

\section{Algorithmically checking \Cref{lem:strict-nonstrict}}\label{sec:strict-nonstrict}
Recall the proposed algorithm is a two-step process:
\begin{enumerate}
    \item Remove all state-action pairs $\tuple{s,a}$ for which we have $\aVal(s) > \min_{s' \in \supp{s}{a}} R(s,a) + \gamma \aVal(s')$;
    \item In the resulting pMDP, use Algorithm 45 for almost-sure reachability from~\cite{DBLP:books/daglib/0020348}.
\end{enumerate}
Termination of the algorithm is trivial since Algorithm 45 also terminates. 
It remains for us to establish correctness. 
We proceed by establishing partial correctness and completeness.

\subsection{Partial correctness}
We start from the correctness guarantee of Algorithm 45 to determine whether there is a policy that ensures almost sure hitting of $\mathcal{I}$ and then establish that the same policy is also worst-case optimal.
Let $\pi$ be a policy such that $\Pr_\theta^\pi(\tau_{\mathcal{I}} < \infty) = 1$ in the sub-pMDP as found by Algorithm 45.

Note that the first step of the procedure can be seen as a restriction of the actions allowed from each state of the pMDP. 
We formalize this by writing $B(s) \subseteq A$ for the set of actions allowed from state $s \in S$. 
Now, consider any further restriction $A'$ such that $\emptyset \neq A'(s) \subseteq B(s)$ for all $s \in S$. 
In particular, $\mathrm{supp} \circ \pi$ is one such restriction of $B$. 
From our choice of actions to be removed (resulting in $B$), we have the following.
\begin{align}
    \aVal(s) = {} & \max_{a \in A} \min_{s' \in \supp{s}{a}} R(s,a) + \gamma \aVal(s') & \text{this is just \Cref{eqn:aval}}\\
    {} = {} & \min_{a \in B(s)} \min_{s' \in \supp{s}{a}} R(s,a) + \gamma \aVal(s') & \text{from our choice of }B\\
    {} = {} & \min_{a \in \mathrm{supp}(\pi(s))} \min_{s' \in \supp{s}{a}} R(s,a) + \gamma \aVal(s') & \text{since } \mathrm{supp} \circ \pi \text{ is a restriction of }B\label{eqn:last-restriction}.
\end{align}
It follows that the antagonistic values are also the (unique) solution to the system consisting of \Cref{eqn:last-restriction} for all $s \in S$. 
This last system is the same as the one prescribed by \Cref{eqn:aval-pol} to obtain the antagonistic value of $\pi$. 
We thus get that $\aVal(s) = \aVal^\pi(s)$ for all $s \in S$, which concludes the proof.\qed

\subsection{Completeness}
This part of the argument is the easiest. 
Note that the first step of the procedure removes state-action pairs $\tuple{s,a}$ that would decrease the antagonistic value of the $s$. 
By \Cref{lem:cali}, all worst-case optimal policies $\pi$ avoid such state-action pairs, lest some trajectory consistent with them would violate the guarantees of the lemma. 
This means the set of worst-case optimal policies $\underline{\pi}$ of the pMDP is preserved after removing those state-action pairs. 
Hence, the procedure does not miss any worst-case optimal policies that hit $\mathcal{I}$ almost surely.\qed

\section{Proof of \Cref{thm:qval-qval-pruning}}
Let $\pi$ be a policy, $\theta \in \Theta_{\mathrm{gp}}$, and $\tuple{s,a}$ a state-action pair. 
We prove that if $a \in \mathrm{supp}(\pi(s))$ and
\(
V_\theta(s) > Q_\theta(s,a),
\)
then $V_\theta(s) > V_\theta^\pi(s)$.

First, note that $V_\theta(s) \geq Q_\theta(s,b) \geq Q_\theta^\pi(s,b)$, for all $b \in A$, by definition of both values. 
Now, we have the following inequalities:
\begin{align*}
    V^\pi_\theta(s) = {} & \sum_{b \in A} \pi(b \given s)Q^\pi_\theta(s,b) & \text{by definition}\\
    {} \leq {} & \sum_{b \in A} \pi(b \given s)Q_\theta(s,b) & \text{since } Q_\theta(s,b) \geq Q_\theta^\pi(s,b)\\
    {} = {} & \sum_{b \in A \setminus a} \pi(b \given s)Q_\theta(s,b) + \pi(a \given s) Q_\theta(s,a)\\
    {} \leq {} & \sum_{b \in A \setminus a} \pi(b \given s)V_\theta(s) + \pi(a \given s) Q_\theta(s,a) & \text{since } V_\theta(s) \geq Q_\theta(s,b)\\
    {} = {} & (1 - \pi(a \given s))V_\theta(s) + \pi(a \given s) Q_\theta(s,a) & \text{because the sum is a convex combination}\\
    {} < {} & (1 - \pi(a \given s))V_\theta(s) + \pi(a \given s) V_\theta(s) & \text{by assumption}\\
    {} = {} & V_\theta & \text{since } 0 < \pi(a \given s) \leq 1.
\end{align*}
This concludes the proof. \qed

\section{Proof of \Cref{thm:aval-qval-pruning}}
Let $\pi$ be a policy, $\theta \in \Theta_{\mathrm{gp}}$, and $\tuple{s,a}$ a state-action pair. 
We prove that if $a \in \mathrm{supp}(\pi(s))$ and
\(
\mathbf{aVal}(s) > Q_\theta(s,a),
\)
then $V_\theta(s) > V_\theta^\pi(s)$. 
Moreover, if $\aVal^\pi(s) = \aVal(s)$ and $\Pr^\pi_\theta\left(\tau_{\mathcal{I}} < \infty \given s_0 = s\right) = 1$, for some $\pi$, the nonstrict version of the equation suffices.

From \Cref{lem:expected_value_bound_for_policy} and the condition of the claim, it follows that $V_\theta(s) > Q_\theta(s,a)$. 
Then, \Cref{thm:qval-qval-pruning} implies the result. 

For the second part of the claim, we make use of \Cref{lem:strict-nonstrict} to get $V_\theta(s) > \aVal(s) \geq Q_\theta(s,a)$ before again invoking \Cref{thm:qval-qval-pruning}. \qed

\section{Proof of \Cref{thm:aval-cval-pruning}}
Let $\pi$ be a policy and $\tuple{s,a}$ be a state-action pair. 
We prove that if  $a \in \mathrm{supp}(\pi(s))$ and
\[
\mathbf{aVal}(s) > \max_{s' \in \supp{s}{a}} R(s,a) + \gamma \mathbf{cVal}(s'),
\]
then for all $\theta \in \Theta_{\mathrm{gp}}$ we have $V_\theta(s) > V_\theta^\pi(s)$. 
Moreover, if $\aVal^\pi(s) = \aVal(s)$ and $\Pr^\pi_\theta\left(\tau_{\mathcal{I}} < \infty \given s_0 = s\right) = 1$, for some $\pi$, the nonstrict version of the equation suffices.

Assume that the condition of the claim is true. We get the following relations.
\begin{align*}
    Q_\theta(s,a) = {} & R(s,a) + \gamma \sum_{s' \in S} P_\theta(s' \given s, a) V_\theta(s') & \text{by definition}\\
    {} \leq {} & R(s,a) + \gamma \sum_{s' \in S} P_\theta(s' \given s,a) \cVal(s') & \text{by \Cref{lem:expected_value_bound_for_policy}}\\
    {} = {} & \sum_{s' \in S} P_\theta(s' \given s,a) \left( R(s,a) + \gamma \cVal(s') \right) & \text{because the sum is a convex combination}\\
    {} < {} & \sum_{s' \in S} P_\theta(s' \given s,a) \aVal(s) & \text{by assumption}\\
    {} = {} & \aVal(s) & \text{again, the sum is a convex combination}.\\
\end{align*}
We have just established that $\aVal(s) > Q_\theta(s,a)$ for the first part of the claim and that the nonstrict version of the inequality holds for the second part of the claim. 
The result thus follows from \Cref{thm:aval-qval-pruning}.\qed

\section{Overview of the benchmarks}
\label{app:benchmak_overview}

\paragraph{Behavior policies.}
For each environment, we construct a behavior policy to collect the datasets and serve as a baseline for (p)SPIBB to bootstrap.
This behavior policy is obtained by first computing an optimal (deterministic) policy using policy iteration. 
For each state, we make the policy stochastic by perturbating the probability mass by some parameter $\alpha \in [0,1]$.
Specifically, we remove probability mass $\alpha$ from the optimal action and re-distribute it uniformly over the other actions. 
Formally, the behavior policy is constructed as follows:
\[\pi_B(a \mid s) = \begin{cases}
    1-\alpha & \textbf{if } \pi(a \mid s) = 1,\\
    \frac{\alpha}{\lvert A \rvert} & \textbf{if }\pi(a \mid s) = 0.
\end{cases} 
\]
An overview of the dimensions and perturbation parameter $\alpha$ of all benchmarks can be found in \Cref{tab:overview_benchmarks}. 
All benchmarks, except Rock-Paper-Scissors, were derived from existing literature, as referenced in the main text.

\begin{table}[b]
    \centering
    \begin{tabular}{ll|cccc}
    \toprule
        Benchmark & Source & $\lvert S\rvert$ & $\lvert A\rvert$ & $\lvert X \rvert$ & $\alpha$\\
    \midrule
        Gridworld & \cite{DBLP:conf/icml/LarocheTC19} & 25 & 4 & 1 & $\nicefrac{1}{2}$ \\
        Resource gathering & \cite{DBLP:conf/icml/BarrettN08} & 376 & 4 & 1 & $\nicefrac{1}{5}$ \\
        Pac-Man & \cite{DBLP:conf/concur/0001KJSB20} & 498 & 5 & 0 & $\nicefrac{1}{20}$ \\
        Rock-paper-scissors & This paper & 1321 & 3 & 9 & $\nicefrac{1}{20}$\\
        Taxi & \cite{DBLP:journals/jair/Dietterich00} & 501 & 6 & 300 & $\nicefrac{1}{20}$ \\
    \bottomrule
    \end{tabular}
    \vspace{1em}
    \caption{Overview of the different benchmarks in terms of number of states, actions, parameters, and the $\alpha$ used to construct the behavior policy.}
    \label{tab:overview_benchmarks}
\end{table}

\subsection{Environments}

\paragraph{Rock-Paper-Scissors.}
Rock-Paper-Scissors (RPS) is a custom benchmark we introduce in this paper.
The RPS benchmark is constructed as a $20$-round game of rock-paper-scissors, in which the player receives a reward of $+1$ for winning, $-1$ for losing, and $0$ in case of a draw. 
The opponent of the player stochastically chooses whether to play rock, paper, or scissors, making use of a non-uniform distribution. 
This non-uniformity induces a bias that the player can exploit. 
The bias used in the RPS benchmarks is similar to an experiment in that paper and involves the opponent being more likely to choose a play that would have beaten the player in the previous round.
This experiment of finding an optimal strategy while playing against a biased opponent is inspired by a paper~\cite{BROCKBANK2024101654}, where the authors describe experiments where human participants had to play RPS against a biased robot. 

The parametric structure is implemented by labeling each transition in the pMDP with one of the following 9 parameters:
\[\begin{array}{lll}
p_{\mathrm{rock},    \mathrm{rock}}, &
p_{\mathrm{paper},   \mathrm{rock}}, &
p_{\mathrm{scissors},\mathrm{rock}}, \\
p_{\mathrm{rock},    \mathrm{paper}}, &
p_{\mathrm{paper},   \mathrm{paper}}, &
p_{\mathrm{scissors},\mathrm{paper}}, \\
p_{\mathrm{rock},    \mathrm{scissors}}, &
p_{\mathrm{paper},   \mathrm{scissors}}, &
p_{\mathrm{scissors},\mathrm{scissors}},
\end{array}\]
where $p_{a,b}$ encodes the probability that the opponent chooses action $a$ if the player played $b$ in the previous round. In \Cref{fig:example_rps_sa_pair}, we can see the parametric distribution of one state-action pair. The chosen action is ``paper'' and in the previous round the player chose ``rock''. The bias is implemented by setting $p_{\mathrm{paper},   \mathrm{rock}} = 0.4$ and $p_{\mathrm{rock}, \mathrm{rock}} = p_{\mathrm{scissors}, \mathrm{rock}} = 0.3$
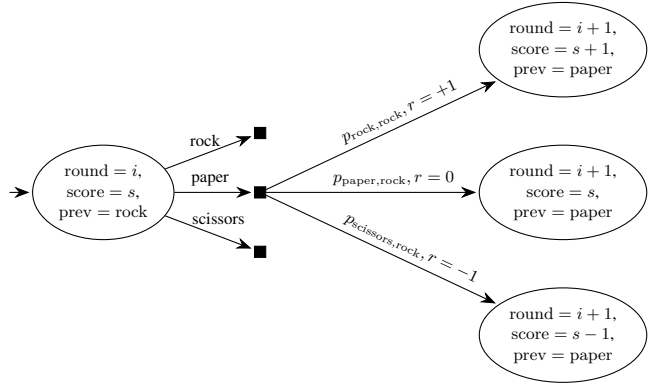
\begin{figure}[h]
\centering
    \tikzset{elliptic state/.style={draw,ellipse}}
\begin{tikzpicture}[shorten >=1pt,auto,node distance=.9 cm, scale = 0.7, transform shape, ->]
        \clip (-2.1,-4.2) rectangle (11,4); 
        \tikzstyle{action} = [fill=black, shape=rectangle, draw]

        \node[elliptic state, align=center,  initial=left, initial text={}](s0){$\mathrm{round}=i$,\\ $\mathrm{score}=s$,\\ $\mathrm{prev}=\mathrm{rock}$};

        \node[action](s0_b)[right= 1.5cm of s0]{};
        \node[action](s0_a)[above= of s0_b]{};
        \node[action](s0_c)[below= of s0_b]{};

        \node[elliptic state, align=center, right= 4cm of s0_b](s4){$\mathrm{round}=i+1$,\\ $\mathrm{score}=s$, \\ $\mathrm{prev}=\mathrm{paper}$};
        \node[elliptic state, align=center, above = of s4 ](s3){$\mathrm{round}=i+1$,\\ $\mathrm{score}=s+1$, \\ $ \mathrm{prev}=\mathrm{paper}$};
        \node[elliptic state, align=center, below = of s4 ](s5){$\mathrm{round}=i+1$,\\ $\mathrm{score}=s-1$, \\ $\mathrm{prev}=\mathrm{paper}$};

        \draw (s0) edge [above, pos=0.45] node [label={[xshift=-0.0cm, yshift=-0.2cm]paper}]{} (s0_b);
        \draw (s0) edge [above, pos=0.45] node [label={[xshift=-0.0cm, yshift=-0.2cm]rock}] {} (s0_a);
        \draw (s0) edge [above, pos=0.45] node [label={[xshift=0.25cm, yshift=-0.2cm]scissors}] {} (s0_c);

        \draw[auto]
        (s0_b) edge [above, pos=0.60, sloped] node {$p_{\mathrm{rock},\mathrm{rock}}, r=+1$}  (s3)
        (s0_b) edge [swap, above, pos=0.60] node {$p_{\mathrm{paper}, \mathrm{rock}}, r=0$} (s4)
        (s0_b) edge [swap, above, pos=0.60, sloped] node {$p_{\mathrm{scissors}, \mathrm{rock}}, r=-1$} (s5);
    \end{tikzpicture}
    \vspace{1em}
    \caption{Example of a biased distribution for a state-action pair in the Rock-Paper-Scissors pMDP.}
    \label{fig:example_rps_sa_pair}
    \vspace{3em}
\end{figure}

\paragraph{Gridworld.} The original Gridworld benchmark \cite{DBLP:conf/icml/LarocheTC19} consists of a 5-by-5 grid that the player has to navigate. The rewards for all transitions are zero, except the reward for reaching a final state. Additionally, there are some obstacles. The actions are left, right, up, and down. Once an action is taken, the player stochastically ends up in the chosen direction or in one of the three other directions. In the original MDP, the probabilities over successor states are fixed. In our pMDP, however, the player goes in the chosen direction with probability $p$, and accidentally ends up in one of the other three possible states each with probability $\frac{1-p}{3}$. This probability $p$ is a parameter of the pMDP and is shared by all state-action pairs.

\paragraph{Taxi.} This is a modification of the Taxi benchmark \cite{DBLP:journals/jair/Dietterich00}. In the original model that passenger is located at a random location, and the taxi starts at a random location. Each passenger-taxi-position combination is one initial state. The original model has a uniform distribution over initial states. In our pMDP version we add a fresh unique initial state that has a transition to each of the original initial states. Each such a transition is labeled with a fresh unique parameter.

\paragraph{Resource gathering.} The Resource Gathering benchmark consists of a 5-by-5 grid across which the player can move. On this grid there are gems and gold that the player has to pick up and bring back to the home position. However, there are also enemies located on certain grid squares. If the player steps on such an enemy square, the player has probability $p$ to be attacked and $1-p$ to be left alone. The state space keeps track of where the player is, whether they have picked up a gem or gold, and how much gems and gold have already been collected. The actions are left, right, up, and down. In the original benchmark MDP \cite{DBLP:conf/icml/BarrettN08}, the attack probability was fixed at $0.1$, while in our version it has been abstracted as a parameter in the pMDP. This parameter $p$ is shared by all actions leading into states that represent enemy-inhabited locations.

\paragraph{Pac-Man.} The Pac-Man benchmark from \cite{DBLP:conf/concur/0001KJSB20} was mostly unchanged, except that we replaced the floating-point probabilities with rational numbers, such that we use an SMT-solver to reason about these probabilities. This benchmark does not have any parameters, and is only used to demonstrate the game-based pruning technique.

\end{document}